\renewcommand{\epsilon}{\varepsilon}% because I don't like \epsilon
\DeclareMathOperator{\Exp}{exp}% normal exponential
\DeclareMathOperator{\Expec}{\mathbb{E}}% symbol for expectation
\DeclareMathOperator{\Indic}{\mathds{1}}% indicator
\DeclareMathOperator{\Proba}{\mathbb{P}}% probability
\DeclareMathOperator{\Var}{Var}% symbole for the variance
\newcommand{\Defeq}{\vcentcolon =}% define equals
\renewcommand{\exp}[1]{\Exp\left(#1\right)}% exponential of something
\newcommand{\expec}[1]{\Expec\left[#1\right]}% expected value
\newcommand{\indic}[1]{\Indic_{#1}}% indicator function
\newcommand{\proba}[1]{\Proba\left (#1\right )}% probability of an event
\newcommand{\var}[1]{\Var\left(#1\right)}% variance of a random variable
\newtheorem{thm}{Theorem}
\newtheorem{lm}{Lemma}
\newtheorem{defn}{Definition}
\newtheorem{prop}{Proposition}
\newcommand{\hide}[1]{\textcolor{black}{}}
\title{Exponentially convergent stochastic $k$-PCA without variance reduction}
\author{Cheng Tang\thanks{tcheng@amazon.com}}
\affil{\small{Amazon AI}}
\date{}
\begin{document}

\maketitle

\begin{abstract}%
We present \textit{Matrix Krasulina}, an algorithm for online $k$-PCA, by generalizing the classic Krasulina's method \citep{krasulina1969} from vector to matrix case. 
We show, both theoretically and empirically, that the algorithm naturally adapts to data low-rankness and converges exponentially fast to the ground-truth principal subspace.
Notably, our result suggests that despite various recent efforts to accelerate the convergence of stochastic-gradient based methods by adding a $O(n)$-time variance reduction step, for the $k$-PCA problem, a truly online SGD variant suffices to achieve exponential convergence on intrinsically low-rank data.
\end{abstract}

%\begin{keywords}%
%  Non-convex optimization, stochastic gradient descent, unsupervised learning
%\end{keywords}

\section{Introduction}
\label{sec:intro}
% Importance of online PCA and cite some works here
Principal Component Analysis (PCA) is ubiquitous in statistics, machine learning, and engineering alike:
For a centered $d$-dimensional random vector $X\in\mathbb{R}^d$, the $k$-PCA problem is defined as finding the ``optimal'' projection of the random vector into a subspace of dimension $k$ so as to capture as much of its variance as possible; 
%formally, let $P\in \mathbb{R}^{d\times d}$ denote an orthogonal projection matrix with $P=W^{\top}W$ such that $WW^{\top}=I_k$, 
formally, we want to find a rank $k$ matrix $W$ such that
%\vspace{-0.4cm}
\[
\max_{W\in \mathbb{R}^{k\times d}, WW^{\top}=I_{k}} \var{W^{\top}W X}
%\vspace{-0.2cm}
\]
In the objective above, $W^{\top}W=W^{\top}(WW^{\top})^{-1}W$ is an orthogonal projection matrix into the subspace spanned by the rows of $W$.
Thus, the $k$-PCA problem seeks matrix $W$ whose row-space captures as much variance of $X$ as possible.
This is equivalent to finding a projection into a subspace that minimizes variance of data outside of it: 
\begin{equation}
\label{eq:pca-population}
\min_{W\in \mathbb{R}^{k\times d}, WW^{\top}=I_k} \Expec \|X - W^{\top}WX\|^2
\end{equation}
Likewise, given a sample of $n$ centered data points $\{X_i\}_{i=1}^n$, the empirical version of problem \eqref{eq:pca-population} is
\begin{equation}
\label{eq:pca-empirical}
\min_{ W\in \mathbb{R}^{k\times d}, WW^{\top}=I_k} \frac{1}{n} \sum_{i=1}^n \|X_i - W^{\top}WX_i\|^2
\end{equation}
The optimal $k$-PCA solution, the row space of optimal $W$, can be used to represent high-dimensional data in a low-dimensional subspace ($k \ll d$), since it preserves most variation from the original data. As such, it usually serves as the first step in exploratory data analysis or as a way to compress data before further operation. 
%{
%\color{red} Mention that algorithms for PCA can also be used for SVD
%}

The solutions to the nonconvex problems \eqref{eq:pca-population} and \eqref{eq:pca-empirical} are the subspaces spanned by the top $k$ eigenvectors (also known as the \textit{principal subspace}) of the population and empirical data covariance matrix, respectively. Although we do not have access to the population covariance matrix to directly solve \eqref{eq:pca-population}, given a batch of samples $\{x_i\}_{i=1}^n$ from the same distribution, we can find the solution to \eqref{eq:pca-empirical}, which asymptotically converges to the population $k$-PCA solution \citep{samplePCA:finite-sample}. Different approaches exist to solve \eqref{eq:pca-empirical} depending on the nature of the data and the computational resources available:

\paragraph{SVD-based solvers} When data size is manageable, one can find the exact solution to \eqref{eq:pca-empirical} via a singular value decomposition (SVD) of the empirical data matrix in $\min\{O(nd^2), O(n^2d)\}$-time and $O(nd)$-space, or in case of truncated SVD in $O(ndk)$-time ($O(nd\log k)$ for randomized solver \citep{randomized-svd:Halko:2011}).
%{\color{red} There are faster versions, but all need to decompose the dataset}
%
\paragraph{Power method}
For large-scale datasets, that is, both $n$ and $d$ are large, the full data may not fit in memory.
%and the challenge becomes how to compute the solutions to \eqref{eq:pca-empirical} in a memory-light and computationally-efficient manner. 
Power method \citep[p.450]{Golub:1996} and its variants are popular alternatives in this scenario; they have less computational and memory burden than SVD-based solvers; power method approximates the principal subspace iteratively: 
At every iteration, power method computes the inner product between the algorithm's current solution and $n$ data vectors
$\{x_i\}_{i=1}^n$, an $O(nd_s)$-time operation, where $d_s$ is the average data sparsity. 
Power method converges exponentially fast \citep{Shamir:exp-pca:2015}: To achieve $\epsilon$ accuracy, it has a total runtime of $O(nd_s\log\frac{1}{\epsilon})$. That is, power method requires multiple passes over the full dataset.

% Variants
% Zoom in on the branch of works that uses stochastic approximation
% advantage: cheap per-iteration (no need to go through full-pass of data)
% usually for convex optimization, but recent advances in non-convex
% balsubramani, ...
% drawback: takes longer to converge
\paragraph{Online (incremental) PCA} 
In real-world applications, datasets might become so large that even executing a full data pass is impossible. Online learning algorithms are developed under an abstraction of this setup: They assume that data come from an ``endless stream'' and only process one data point (or a constant sized batch) at a time.
Online PCA mostly fall under two frameworks: 
1. The online worst-case scenario, where the stream of data can have a non-stationary distribution \citep{Nie:online-pca:2016, Boutsidis:online-pca:2015, Warmuth:online-pca:2006}.
2. The stochastic scenario, where one has access to i.i.d. samples from an unknown but fixed distribution \citep{Shamir:exp-pca:2015, balsubramani13, Mitliagkas:streaming-pca:2013, Arora:stochastic-pca:2013}. 

In this paper, we focus on the stochastic setup: We show that a simple variant of stochastic gradient descent (SGD), which generalizes the classic Krasulina's algorithm from $k=1$ to general $k\ge 1$, can provably solve the $k$-PCA problem in Eq. \eqref{eq:pca-population} with an exponential convergence rate. 
It is worth noting that stochastic PCA algorithms, unlike batch-based solvers, can be used to optimize both the population PCA objective \eqref{eq:pca-population} and its empirical counterpart \eqref{eq:pca-empirical}.

\paragraph{Oja's method and VR-PCA}
% VR techniques for SGD on convex problems
%  Shamir's breakthrough: Augmenting Oja's algorithm for PCA and k-PCA 
%  Drawback of Shamir's method: still needs full-pass of data every epoch (not fully online)
%The clear advantage of SGD based algorithms is their cheap iteration-wise computation cost. 
While SGD-type algorithms have iteration-wise runtime independent of the data size,
their convergence rate, typically linear in the number of iterations, is significantly slower than that of batch gradient descent (GD). 
To speed up the convergence of SGD, the seminal work of \citet{svrg} initiated a line of effort in deriving Variance-Reduced (VR) SGD by cleverly mixing the stochastic gradient updates with occasional batch gradient updates. 
For convex problems, VR-SGD algorithms have provable exponential convergence rate.
Despite the non-convexity of $k$-PCA problem,
\citet{Shamir:exp-pca:2015, Shamir:general-oja:2016} augmented Oja's method \citep{oja1982}, a popular stochastic version of power method, with the VR step, and showed both theoretically and empirically that the resulting VR-PCA algorithm achieves exponential convergence. 
However, since a single VR iteration requires a full-pass over the dataset, VR-PCA is no longer an online algorithm.
\paragraph{Minimax lower bound}
In general, the tradeoff between convergence rate and iteration-wise computational cost is unavoidable in light of the minimax information lower bound \citep{Vu:minimax-subspace:2013, vu:pca-lowerbound:2012}: Let $\Delta^n$ (see Definition \ref{defn:loss_measure}) denote the distance between the ground-truth rank-$k$ principal subspace and the algorithm's estimated subspace after seeing $n$ samples. \citet[Theorem 3.1]{Vu:minimax-subspace:2013} established that \textit{there exists data distribution (with full-rank covariance matrices)} such that the following lower bound holds:
\begin{eqnarray}
\label{eqn:lower-bound}
\expec{\Delta^n}\ge\Omega(\frac{\sigma^2}{n}) \text{~~for~}\sigma^2 \ge \frac{\lambda_1 \lambda_{k+1}}{(\lambda_k - \lambda_{k+1})^2}\, ,
\end{eqnarray}
Here $\lambda_k$ denotes the $k$-th largest eigenvalue of the data covariance matrix. This immediately implies a $\Omega(\frac{\sigma^2}{t})$ lower bound on the convergence rate of online $k$-PCA algorithms, since for online algorithms the number of iterations $t$ equals the number of data samples $n$.
Thus, sub-linear convergence rate is impossible for online $k$-PCA algorithms on general data distributions.

%
%
%
%

% None previous work has focused on exploiting the low-rankness structure
% Cite lower bound: briefly explain that it does not apply to low-rank case
% This is what we focus on in this paper
%
\subsection{Our result: escaping minimax lower bound on intrinsically low rank data}
Despite the discouraging lower bound for online $k$-PCA, note that in Eq.~\eqref{eqn:lower-bound}, $\sigma$ equals zero when the data covariance has rank less than or equal to $k$, and consequently, the lower bound becomes un-informative. Does this imply that data low-rankness can be exploited to overcome the lower bound on the convergence rate of online $k$-PCA algorithms?

Our result answers the question affirmatively: 
Theorem \ref{thm:main} suggests that on low-rank data, an online $k$-PCA algorithm, namely, Matrix Krasulina (Algorithm \ref{algo:kpca}), produces estimates of the principal subspace that locally converges to the ground-truth in order $O(\exp{-Ct})$, where $t$ is the number of iterations (the number of samples seen) and $C$ is a constant.
Our key insight is that Krasulina's method \citep{krasulina1969}, in contrast to its better-studied cousin Oja's method \citep{oja1982}, 
is stochastic gradient descent with a self-regulated gradient for the PCA problem, and that when the data is of low-rank, the gradient variance vanishes as the algorithm's performance improves.

In a broader context, our result is an example of ``learning faster on easy data'', a phenomenon widely observed for online learning \citep{adaptive-online:Beygelzimer2015}, clustering \citep{kumar}, and active learning \citep{noise-adaptive:Wang2016}, to name a few.
While low-rankness assumption has been widely used to regularize solutions to matrix completion problems \citep{Jain:AM:2013, keshavan:2010, candes:2009} and to model the related robust PCA problem \citep{nonconvex-rpca14, rpca-Candes:2011}, we are unaware of previous such methods that exploit data low-rankness to significantly reduce computation. 
% General overview of results
% Krasulina vs Oja

\section{Preliminaries}
\label{sec:bak}
% Introduce the formal setup and notations here
We consider the following online stochastic learning setting: At time $t\in \mathbb{N}\setminus\{0\}$, we receive a random vector $X^t\in \mathbb{R}^d$ drawn i.i.d from an unknown centered probability distribution with a finite second moment. We denote by $X$ a generic random sample from this distribution. Our goal is to learn $W\in \mathbb{R}^{k^{\prime}\times d}$ so as to optimize the objective in Eq~\eqref{eq:pca-population}. 
%
%\vspace{-0.2cm}
\paragraph{Notations}
We let $\Sigma^*$ denote the covariance matrix of $X$,
%\vspace{-0.1cm}
$
\Sigma^* \Defeq \expec{XX^{\top}} \, .
%\mbox{~~or~~}
%\Sigma^*\Defeq \frac{1}{n}\sum_{i=1}^n x_i x_i^{\top}
%\vspace{-0.1cm}
$
%Note t the population and the empirical covariance matrix, since the latter can be viewed as the population covariance of the discrete uniform distribution over the finite sample. 
%% Data and low rank assumption
%We assume that $\Sigma^*$ has low rank and that the data norm is bounded almost surely; that is, there exits $b$ and $k$ such that
%
%\begin{eqnarray}
%\label{eq:assumption}
%\proba{\sup_X \|X\|^2 > b} = 0
%\mbox{~~and~~}
%rank(\Sigma^*) = k
%%\item
%%Initializer $W^o$ satisfies 
%%\[
%%tr(U^* (W^o)^{\top} W^o) \ge k - \frac{1-\tau}{2}
%%\]
%\end{eqnarray}
%
%
% 
%We let $tr(\cdot)$ denote the trace operator.
We let $\{u_i\}_{i=1}^k$ denote the top $k$ eigenvectors of covariance matrix $\Sigma^*$, corresponding to its largest $k$ eigenvalues, $\lambda_1\ge, \dots, \ge \lambda_k$. 
Given that $\Sigma^*$ has rank $k$, we can represent it by its top $k$ eigenvectors:
%\vspace{-0.2cm}
$
\Sigma^*\Defeq  \sum_{i=1}^k\lambda_i u_i u_i^{\top} \, .
%\vspace{-0.1cm}
$
We let 
$U^* \Defeq \sum_{i=1}^k u_i u_i^{\top} \, .$ 
That is, $U^*$ is the orthogonal projection matrix into the subspace spanned by $\{u_i\}_{i=1}^k$.
For any integer $p>0$, we let $I_p$ denote the $p$-by-$p$ identity matrix.
We denote by $\|\cdot\|_F$ the Frobenius norm, by $tr(\cdot)$ the trace operator. 
For two square matrices $A$ and $B$ of the same dimension, we denote by $A\succeq B$ if $A-B$ is positive semidefinite.
We use curly capitalized letters such as $\mathcal{G}$ to denote events. For an event $\mathcal{G}$, we denote by $\indic{\mathcal{G}}$ its indicator random variable; that is, 
$\indic{\mathcal{G}}=1$ if event $\mathcal{G}$ occurs and $0$ otherwise.
%
%\vspace{-0.35cm}
\paragraph{Optimizing the empirical objective}
We remark that our setup and theoretical results apply not only to the optimization of population $k$-PCA problem~\eqref{eq:pca-population} in the infinite data stream scenario, but also to the empirical version~\eqref{eq:pca-empirical}: Given a finite dataset, we can simulate the stochastic optimization setup by sampling uniformly at random from it. This is, for example, the setup adopted by \citet{Shamir:general-oja:2016, Shamir:exp-pca:2015}.
%Our goal is to accurately approximate the subspace spanned by the top $k$ eigenvectors of $\Sigma^*$ (also known as the $k$-dimensional principal subspace) in a memory and computationally efficient manner, by processing data one at a time from the stream $x_1, x_2, \dots$
\paragraph{Assumptions}
In our analysis, we assume that $\Sigma^*$ has low rank and that the data norm is bounded almost surely; that is, there exits $b$ and $k$ such that
\begin{eqnarray}
\label{eq:assumption}
\proba{\sup_X \|X\|^2 > b} = 0
\mbox{~~and~~}
rank(\Sigma^*) = k
%\item
%Initializer $W^o$ satisfies 
%\[
%tr(U^* (W^o)^{\top} W^o) \ge k - \frac{1-\tau}{2}
%\]
\end{eqnarray}

\subsection{Oja and Krasulina}
%
%Below we introduce two long-standing algorithms used in practice for this setting.
%% Krasulina for k=1
In this section, we introduce two classic online algorithms for $1$-PCA, Oja's method and Krasulina's method.
%\paragraph{Oja's algorithm}
%For $k=1$,

%
\paragraph{Oja's method}
Let $w^t \in \mathbb{R}^d$ denote the algorithm's estimate of the top eigenvector of $\Sigma^*$ at time $t$.
Then letting $\eta^t$ denote learning rate, and $X$ be a random sample, Oja's algorithm has the following update rule:
\[
w^t \leftarrow w^{t-1} + \eta^t (XX^{\top} w^{t-1})
\text{~~and~~}
w^t \leftarrow \frac{w^t}{\|w^t\|}
\]
We see that Oja's method is a stochastic approximation algorithm to power method.
For $k>1$, Oja's method can be generalized straightforwardly, by replacing $w^t$ with matrix $W^t\in\mathbb{R}^{k\times d}$, and by replacing the normalization step with row orthonormalization, for example, by QR factorizaiton.
\paragraph{Krasulina's method}
Krasulina's update rule is similar to Oja's update but has an additional term:
\begin{eqnarray*}
w^t 
\leftarrow 
w^{t-1} + \eta^t (XX^{\top} w^{t-1} - w^{t-1} (X^{\top} \frac{w^{t-1}}{\|w^{t-1}\|})^2 ) 
\end{eqnarray*}

In fact, this is stochastic gradient descent on the objective function below, which is equivalent to Eq \eqref{eq:pca-population}:
\[
\Expec \|X - \frac{w^t(w^t)^{\top}}{\|w^t\|^2}X\|^2
\]

We are unaware of previous work that generalizes Krasulina's algorithm to $k>1$.
%%%
\subsection{Gradient variance in Krasulina's method}
Our key observation of Krasulina's method is as follows:
Let $\tilde w^t\Defeq \frac{w^t}{\|w^t\|}$; Krasulina's update can be re-written as
\[
w^t 
\leftarrow 
w^{t-1} + \|w^t\|\eta^t (XX^{\top} \tilde w^{t-1} - \tilde w^{t-1} (X^{\top} \tilde w^{t-1})^2 ) 
\]
Let 
\[
s^t\Defeq (\tilde w^t)^{\top} X \text{~~(projection coefficient)} \, 
\] 
and
\[
r^t \Defeq X^{\top} - s^t (\tilde w^t)^{\top} = X^{\top} - (\tilde w^t)^{\top} X (\tilde w^t)^{\top} \text{~~(projection residual)}\, 
\] 
Krasulina's algorithm can be further written as:
\begin{eqnarray*}
%\label{eqn:krasulina}
w^t \leftarrow w^{t-1} + \|w^t\|\eta^t s^{t-1} (r^{t-1})^{\top}
\end{eqnarray*}
%
%Since $w^t$ is normalized, we let $P^t\Defeq w^t (w^t)^{\top}$, then $P^t$ is an orthogonal projection matrix into the one-dimensional subspace spanned by $w^t$. 
%Depending on whether we are optimizing the objective in Eq.~\eqref{eq:pca-population} or Eq.~\eqref{eq:pca-empirical}, we denote by
%\begin{equation}
%\Delta^t \Defeq \Expec \|x - P^t x\|^2
%\mbox{~~or~~}
%\Delta^t \Defeq \frac{1}{n}\sum_{i=1}^n \|x_i - P^t x_i\|^2
%\label{defn:loss}
%\end{equation}
The variance of the stochastic gradient term can be upper bounded as:
\[
\|w^t\|^2 \var{s^{t-1} (r^{t-1})^{\top}}
\le
\|w^t\|^2
\sup_X \|X\|^2
\Expec \|r^t\|^2
%=
%\Expec \|x - \tilde w^t(\tilde w^t)^{\top} x\|^2
\]
Note that
%\vspace{-0.4cm}
\[
\Expec \|r^t\|^2 = \Expec \|X - \frac{w^t(w^t)^{\top}}{\|w^t\|^2}X\|^2
%\vspace{-0.2cm}
\]
%On the other hand, we can bound the variance of the stochastic gradient term at the $t$-th iteration as
%\[
%\var{s^{t-1} (r^{t-1})^{\top}}
%\le
%\Expec \|s^{t-1} (r^{t-1})^{\top}\|^2
%\le
%r \Expec \|r^{t-1}\|^2
%\]
This reveals that the variance of the gradient naturally decays as Krasulina's method decreases the $k$-PCA optimization objective.
Intuitively, as the algorithm's estimated (one-dimensional) subspace $w^t$ gets closer to the ground-truth subspace $u_1$, $(w^t)^{\top}X$ will capture more and more of $X$'s variance, and $\Expec \|r^t\|^2$ eventually vanishes.

In our analysis, we take advantage of this observation to prove the exponential convergence rate of Krasulina's method on low rank data.

\begin{algorithm}[t]
   \caption{Matrix Krasulina's method}
   \label{algo:kpca}
\begin{algorithmic}
   \STATE {\bfseries Input:} 
   Initial matrix $W^o\in\mathbb{R}^{k^{\prime}\times d}$; learning rate schedule $(\eta^t)$; number of iterations, $T$;
   \WHILE{$t\leq T$}
   \STATE{1.}  Sample $X^t$ i.i.d. from the data distribution
   \STATE{}
    \STATE{2.}   Orthonormalize the rows of $W^{t-1}$ (e.g., via QR factorization)
    \STATE{} 
    \STATE{3.} 
   $
   W^{t} \leftarrow W^{t-1} + \eta^{t} W^{t-1} X^t (X^t - (W^{t-1})^{\top}W^{t-1} X^t)^{\top}
   $
%   \STATE{\quad}
%   $W_{j\star}^{t+1}\leftarrow q \frac{W_{j\star}^{t+1}}{\|W_{j\star}^{t+1}\|}$  
   \ENDWHILE
    \STATE {\bfseries Output:} $W^{\top}$
\end{algorithmic}
\end{algorithm}

\section{Main results}
%In this section, we derive a generalized Krasulina's algorithm for $k$-PCA with any $1\le k < d$, and present our theoretical finding of its convergence rate.
%
%
%% More general problem, loss measure and notations
%
%
%
%
%When $k>1$, we would like an algorithm to learn the $k$-dimensional principal subspace from the data on-the-fly. 
Generalizing vector $w^t \in\mathbb{R}^d$ to matrix $W^t \in \mathbb{R}^{k^{\prime} \times d}$ as the algorithm's estimate at time $t$, we derive \textit{Matrix Krasulina's method} (Algorithm \ref{algo:kpca}), so that the row space of $W^t$ converges to the $k$-dimensional subspace spanned by $\{u_1, \dots, u_k\}$.
%\footnote{
%We stress that our analysis applies to the more general case, where the algorithm provides estimate $W^t\in\mathbb{R}^{k^{\prime}\times d}$ with $k^{\prime} \ge k=rank(\Sigma^*)$; we stick to the case $k^{\prime}=k$ for simplicity. 
%}
%we want the subspace spanned by the rows of $W^t$ to approximate the subspace spanned by $\{u_i\}_{i=1}^r$.
%\[
%P^t \Defeq (W^t)^{\top} W^t \, ,
%\]
%
%
\paragraph{Matrix Krasulina's method}
Inspired by the original Krasulina's method, we design the following update rule for the Matrix Krasulina's method (Algorithm \ref{algo:kpca}): Let 
\[
s^t \Defeq W^{t-1} X^t \, \text{~~and~~} r^t \Defeq X^t- (W^{t-1})^{\top}(W^{t-1}(W^{t-1})^{\top})^{-1} W^{t-1} X^t \, ,
%\vspace{-0.3cm}
\]
%and
%\[
%r^t \Defeq X^t- (W^{t-1})^{\top}(W^{t-1}(W^{t-1})^{\top})^{-1} W^{t-1} X^t
%\, ,
%\]
Since we impose an orthonormalization step in Algorithm \ref{algo:kpca}, $r^t$ is simplified to
\vspace{-0.2cm}
\[
r^t \Defeq X^t- (W^{t-1})^{\top} W^{t-1} X^t
\]
%Introduce the algorithm and observation
Then the update rule of Matrix Krasulina's method can be re-written as
\vspace{-0.2cm}
\[
W^{t} \leftarrow W^{t-1} + \eta^{t} s^t (r^t)^{\top}
\]
For $k^{\prime}=1$, this reduces to Krasulina's update with $\|w^t\|=1$.
The self-regulating variance argument for the original Krasulina's method still holds, that is, we have
\begin{eqnarray*}
\Expec \|s^t (r^t)^{\top}\|^2
\le
b \Expec \|r^t\|^2 
=
b \Expec \|X - (W^t)^{\top}W^t X\|^2
\end{eqnarray*}
where $b$ is as defined in Eq~\eqref{eq:assumption}.
We see that
%\[
%\Expec \|s^t (r^t)^{\top}\|^2 
%\le
%\Expec \|x - (W^t)^{\top} (W^t(W^t)^{\top})^{-1} W^t x\|^2
%\]
%where 
the last term coincides with the objective function in Eq.~\eqref{eq:pca-population}.

%%%
\paragraph{Loss measure}
Given the algorithm's estimate $W^t$ at time $t$, 
we let $P^t$ denote the orthogonal projection matrix into the subspace spanned by its rows, $\{W^t_{i,\star}\}_{i=1}^{k^{\prime}}$, that is,
\[
P^t \Defeq (W^t)^{\top} (W^t (W^t)^{\top})^{-1} W^t = (W^t)^{\top} W^t
\, ,
\]
%where the second equality holds by the orthonormality of rows of $W^t$ in Algorithm \ref{algo:kpca}.
In our analysis, we use the following loss measure to track the evolvement of $W^t$:
\begin{defn}[Subspace distance]
\label{defn:loss_measure}
Let $\mathcal{S}$ and $\mathcal{\hat S}^t$ be the ground-truth principal subspace and its estimate of Algorithm \ref{algo:kpca} at time $t$ with orthogonal projectors $U^*$ and $P^t$, respectively. We define the subspace distance between $S$ and $\hat S^t$ as
$
\Delta^t \Defeq tr(U^* (I- P^t)) 
=
k - tr(U^* P^t)
$.
\end{defn}
Note that $\Delta^t$ in fact equals the sum of squared canonical angles between $\mathcal{S}$ and $\mathcal{\hat S}^t$, and coincides with the subspace distance measure used in related theoretical analyses of $k$-PCA algorithms \citep{allen-zhu:stochastic-pca:2017, Shamir:general-oja:2016, Vu:minimax-subspace:2013}.
%For general $k$, let $V_k$ and $Z_k$ denote the matrix whose columns are the top $k$ and bottom $d-k$ eigenvectors, respectively; 
%it can be seen that 
%$\Delta^t = \|W^t Z_k\|^2_F = k - \|W^tV_k\|^2_F$, which equals the loss measure used in \citet{allen-zhu:stochastic-pca:2017} and \citet{Shamir:general-oja:2016}.
%
In addition, $\Delta^t$ is related to the $k$-PCA objective function defined in Eq.~$\eqref{eq:pca-population}$ as follows (proved in Appendix Eq~\eqref{eqn:loss_measure}):
\[
\lambda_k \Delta^t \le \Expec \|X - (W^t)^{\top}(W^t (W^t)^{\top})^{-1}W^t X\|^2 \le \lambda_1 \Delta^t
\]
%To see this, note that 
%\[
%\lambda_1 U^* \succeq \Sigma^*  \succeq \lambda_k U^* \, ,
%\]
%This implies that
%\begin{eqnarray*}
%(I-P^t)^{\top}\lambda_1 U^*  (I-P^t)
%\succeq 
%(I-P^t)^{\top} \Sigma^*  (I-P^t) \\
%\succeq 
%(I-P^t)^{\top} \lambda_k U^* (I-P^t) \, .
%\end{eqnarray*}
%Note that the projection matrix satisfies 
%$(I-P^t)^{\top}=(I-P^t)$ 
%and 
%$(I-P^t)(I-P^t) = (I-P^t)$.
%Therefore,
%\[
%\lambda_1 tr U^*  (I-P^t)
%\ge
%tr \Sigma^*  (I-P^t)
%\ge
%\lambda_k  tr U^*  (I-P^t)
%\]
%Finally, note that
%$tr \Sigma^*  (I-P^t) = \Expec \|X - (W^t)^{\top}W^t X\|^2$.
%
%\subsection{Overview of our results}
%{
%\color{red} expand this section and the discussion below
%}
We prove the local exponential convergence of Matrix Krasulina's method measured by $\Delta^t$. Our main contribution is summarized by the following theorem.
%%%%%%%%%%%%%% Main Theorem
\begin{thm}[Exponential convergence with constant learning rate]
\label{thm:main}
Suppose assumption Eq.~\eqref{eq:assumption} holds.
Suppose the initial estimate $W^o\in \mathbb{R}^{k^{\prime}\times d}$ ($k^{\prime}\ge k$) in Algorithm \ref{algo:kpca} satisfies that, for some $\tau \in (0, 1)$, 
\vspace{-0.2cm}
\[
tr(U^*P^o)\ge k-\frac{1-\tau}{2} \, ,
\vspace{-0.2cm}
\]
Suppose for any $\delta>0$, we choose a constant learning rate $\eta^t=\eta$ such that
%
%\vspace{-0.2cm}
\[
\eta \le \min \bigg\{\frac{\sqrt{2}-1}{b}, \frac{\lambda_k\tau}{\lambda_1 b (k+3)}, 
\frac{2\lambda_k\tau}{\frac{16}{1-\tau} \ln \frac{1}{\delta}(b+\|\Sigma^*\|_F)^2 + b(k+1)\lambda_1}
 \bigg\}
% \vspace{-0.2cm}
\]
%We define event
%$
%\mathcal{G}_t\Defeq \{tr(U^*P^i)\ge k- 1+\tau, \forall i\leq t\} \, .
%$
Then there exists event $\mathcal{G}_t$ such that
$
\proba{\mathcal{G}_t} \ge 1 - \delta \, ,
$
and 
\[
\expec{\Delta^t | \mathcal{G}_{t}}
\le
\frac{1}{1-\delta} \exp{- t \eta \tau \lambda_k}
%\vspace{-0.2cm}
\]
\end{thm}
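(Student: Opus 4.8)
The plan is to track the evolution of $\Delta^t = k - \mathrm{tr}(U^* P^t)$ as a stochastic process and show it contracts in expectation by a multiplicative factor $(1 - \eta\tau\lambda_k)$ at each step, up to a controllable noise term, then iterate. First I would write one update step: since $W^t = W^{t-1} + \eta s^t (r^t)^\top$ with $s^t = W^{t-1}X^t$ and $r^t = (I - P^{t-1})X^t$, expand $P^t = (W^t)^\top (W^t(W^t)^\top)^{-1} W^t$ (note the orthonormalization in step 2 makes $W^{t-1}(W^{t-1})^\top = I$) and compute $\mathrm{tr}(U^* P^t)$ to second order in $\eta$. The first-order term, after taking conditional expectation over $X^t$ given $W^{t-1}$, should produce something like $2\eta\, \mathrm{tr}\big(U^*(I - P^{t-1})\Sigma^*(I - P^{t-1})\big)$ minus lower-order corrections; using $\mathrm{rank}(\Sigma^*) = k$ and the eigengap one lower-bounds this drift by $2\eta\lambda_k \Delta^{t-1}$ (this is where the low-rank assumption is essential: $\Sigma^*$ acts on the residual space exactly like a well-conditioned operator on the part of $\mathcal{S}$ not yet captured, and there is no $\lambda_{k+1}$ leakage term). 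The $O(\eta^2)$ terms are bounded using $\|X\|^2 \le b$ a.s. and $\mathbb{E}\|r^t\|^2 \le \lambda_1 \Delta^{t-1}$ from the loss-measure inequality, giving a contraction of the form $\mathbb{E}[\Delta^t \mid W^{t-1}] \le (1 - 2\eta\lambda_k + C\eta^2 b\lambda_1)\,\Delta^{t-1}$ plus a genuinely stochastic fluctuation.

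The subtlety is that the recursion is only a \emph{conditional} contraction and the $O(\eta^2)$ term includes a variance piece (from $s^t(r^t)^\top$ and from the matrix-inverse expansion) that is \emph{not} bounded by a multiple of $\Delta^{t-1}$ deterministically — it is only bounded in expectation. So a naive $\mathbb{E}[\Delta^t] \le \rho\, \mathbb{E}[\Delta^{t-1}]$ chained over $t$ steps would work for the unconditioned expectation, but the theorem wants a high-probability statement with the sharper rate $\exp(-t\eta\tau\lambda_k)$ and the constant $1/(1-\delta)$. The natural device is a supermartingale / stopping-time argument: define $\mathcal{G}_t$ as the event that $\Delta^s$ stays below some threshold (say $\Delta^s \le \frac{1-\tau}{2}$, matching the initialization condition so we remain in the basin where the contraction coefficient is at least $1 - \eta\tau\lambda_k$) for all $s \le t$; show $M^s \Defeq \Delta^s \prod_{j\le s}(1-\eta\tau\lambda_k)^{-1}$ restricted to this event is a nonnegative supermartingale; apply a maximal inequality (Ville's / Doob's) to get that $\mathbb{P}(\text{exit the basin by time } t)$ is controlled by $\mathbb{E}[M^0]/\text{threshold}$, which the initialization and the learning-rate bound on $\eta$ (specifically the third term involving $\ln\frac1\delta$ and $(b + \|\Sigma^*\|_F)^2$) are designed to make at most $\delta$. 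On $\mathcal{G}_t$ one then gets $\mathbb{E}[\Delta^t \mid \mathcal{G}_t] \le \frac{1}{\mathbb{P}(\mathcal{G}_t)}\mathbb{E}[\Delta^t \indic{\mathcal{G}_t}] \le \frac{1}{1-\delta}(1-\eta\tau\lambda_k)^t \Delta^0 \le \frac{1}{1-\delta}\exp(-t\eta\tau\lambda_k)$, using $\Delta^0 \le \frac{1-\tau}{2} < 1$ and $1-x \le e^{-x}$.

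The main obstacle I anticipate is the second-order bookkeeping in step one: expanding $(W^t(W^t)^\top)^{-1}$ and keeping careful track of which $O(\eta^2)$ contributions can be absorbed as a deterministic multiple of $\Delta^{t-1}$ (these can be folded into the contraction coefficient and are handled by the second learning-rate constraint $\eta \le \frac{\lambda_k\tau}{\lambda_1 b(k+3)}$) versus which are mean-zero-plus-variance and must be routed through the supermartingale analysis. Getting the constants to line up so that all three terms in the $\min$ defining $\eta$ are exactly what's needed — the first ensuring $\|W^t(W^t)^\top - I\|$ stays small enough for the inverse expansion to converge, the second controlling the deterministic $\eta^2$ drift, the third controlling the martingale increment variance via Freedman-type concentration — is the delicate part; each inequality in the calculation has to be no weaker than what the final $\eta$-bound permits. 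The rest (chaining, converting to the conditional expectation, invoking $\mathbb{E}\|r^t\|^2 \le \lambda_1\Delta^t$) is routine.
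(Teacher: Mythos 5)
Your overall architecture matches the paper's: an iteration-wise expansion of $P^{t+1}$ giving a conditional contraction $\expec{\Delta^{t+1}\mid\mathcal{F}_t}\le(1-2\eta\lambda_k\tau+O(\eta^2))\Delta^t$ inside a basin of attraction (this is the paper's Proposition~\ref{prop:iter_wise} plus Lemma~\ref{lm:stationary_pts}), a maximal-inequality argument bounding the probability of ever leaving the basin, and the final division by $\proba{\mathcal{G}_t}\ge 1-\delta$ to convert $\expec{\Delta^t\indic{\mathcal{G}_t}}$ into $\expec{\Delta^t\mid\mathcal{G}_t}$. However, there is a concrete gap in the middle step. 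The supermartingale you propose, $M^s=\Delta^s\prod_{j\le s}(1-\eta\tau\lambda_k)^{-1}$, cannot produce a failure probability of $\delta$: Ville's/Doob's inequality applied to it bounds the exit probability by $\expec{M^0}/\text{threshold}=\Delta^0/\text{threshold}$, a quantity that depends on neither $\delta$ nor $\eta$, so the claim that the third learning-rate constraint ``is designed to make [it] at most $\delta$'' has no mechanism to act through. With the paper's basin $\{\Delta^s\le 1-\tau\}$ and $\Delta^0\le\frac{1-\tau}{2}$ this gives only $1/2$; with your basin threshold $\frac{1-\tau}{2}$, which coincides with the initialization level, it gives the vacuous bound $1$. (The slack between the initialization level $\frac{1-\tau}{2}$ and the basin boundary $1-\tau$ is essential and must not be collapsed.)

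The fix is the paper's Lemma~\ref{lm:conditional} and Proposition~\ref{prop:bad_event}: exponentiate the loss and show that $M_i=\indic{\mathcal{G}_{i-1}}\exp{s\Delta^i}$ is a supermartingale for the specific choice $s=\frac{2}{1-\tau}\ln\frac{1}{\delta}$. This works because the per-step fluctuation is a conditionally mean-zero random variable $Z$ with the deterministic bound $|Z|\le 2(b+\|\Sigma^*\|_F)\sqrt{\Delta^i}$ (Proposition~\ref{prop:iter_wise}, V2), so Hoeffding's lemma gives $\expec{\exp{-2s\eta Z}\mid\mathcal{F}_i}\le\exp{2s^2\eta^2|Z|^2}$, and the third constraint on $\eta$ is exactly what makes the resulting variance term $8s\eta^2(b+\|\Sigma^*\|_F)^2$ dominated by the drift $2\eta\lambda_k\tau$, restoring the supermartingale property. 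Doob's inequality then yields $\proba{\mathcal{G}_t^c}\le\expec{\indic{\mathcal{G}_0}\exp{s\Delta^1}}/\exp{s(1-\tau)}\le\exp{-s(1-\tau)/2}=\delta$, which is where the $\ln\frac{1}{\delta}$ actually enters. Your mention of Freedman-type concentration points in the right direction, but the object you wrote down does not implement it; you need the exponential moment of $\Delta^i$, not a multiplicatively rescaled $\Delta^i$. The remainder of your outline (the low-rank drift bound, the $\expec{\norm{r^t}^2}\le\lambda_1\Delta^t$ inequality, the chaining and the final conditioning step) agrees with the paper.
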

%
%
%{\color{red} Implicit requirement on rank of $W^o$ is $k^{\prime}\ge k$.}
%{\color{red}Interpretation of good event and convergence result; explain roughly how to set learning rate}
From Theorem \ref{thm:main}, we observe that
(a). The convergence rate of Algorithm \ref{algo:kpca} on strictly low-rank data does not depend on the data dimension $d$, but only on the intrinsic dimension $k$. This is verified by our experiments (see Sec.~\ref{sec:experiments}).
(b). We see that the learning rate should be of order $O(\frac{1}{k\lambda_1})$: Empirically, we found that setting $\eta$ to be roughly $\frac{1}{10\lambda_1}$ gives us the best convergence result. Note, however, this learning rate setup is not practical since it requires knowledge of eigenvalues.

\paragraph{Comparison between Theorem \ref{thm:main} and \citet[Theorem 1]{Shamir:general-oja:2016}}
%\vspace{-0.2cm}
(1).
The result in \citet{Shamir:general-oja:2016} does not rely on the low-rank assumption of $\Sigma^*$. Since the variance of update in Oja's method is not naturally decaying, they use VR technique inspired by \citet{svrg} to reduce the variance of the algorithm's iterate, which is computationally heavy: the block version of VR-PCA converges at rate $O(\exp{-CT})$, where $T$ denotes the number of data passes. 
(2).
Our result has a similar learning rate dependence on the data norm bound $b$ as that of  \citet[Theorem 1]{Shamir:general-oja:2016}.
(3).
The initialization requirement in Theorem \ref{thm:main} is comparable to \citet[Theorem 1]{Shamir:general-oja:2016}; we note that the factor $1/2$ in $\frac{1-\tau}{2}$ in our requirement is not strictly necessary in our analysis, and can be set arbitrarily close to 1.
(4).
Conditioning on the event of successful convergence, their exponential convergence rate result holds deterministically, whereas our convergence rate guarantee holds in expectation.

%

%Finally, we make the observation that the convergence rate and the learning rate in Theorem \ref{thm:main} is independent of the data dimension $d$, given our low-rank assumption: while $b=\max_x\|x\|^2$ seemingly depends on $d$, we note that in effect, it only depends on the intrinsic data dimension $k$.
%%
%{\color{red} Make a comment about linear convergence rate on general data distribution}

%%%%%% Related works
\subsection{Related Works}
Theoretical guarantees of stochastic optimization traditionally require convexity \citep{SCO}. 
However, many modern machine learning problems, especially those arising from deep learning and unsupervised learning, are non-convex; PCA is one of them: The objective in \eqref{eq:pca-population} is non-convex in $W$.
Despite this, a series of recent theoretical works have proven stochastic optimization to be effective for PCA, mostly variants of Oja's method \citep{allen-zhu:stochastic-pca:2017, Shamir:gapfree-pca:2016, Shamir:general-oja:2016, Shamir:exp-pca:2015, desa:sgd-for-matrix, hardt:noisy-power-method:2014, balsubramani13}.

%Most of the recently developed stochastic PCA algorithms \citep{allen-zhu:stochastic-pca:2017, Shamir:exp-pca:2015, Shamir:gapfree-pca:2016, Shamir:general-oja:2016, hardt:noisy-power-method:2014} are variants of Oja's algorithm \citep{oja1982}, which is a stochastic version of the well-known power iteration. 
Krasulina's method \citep{krasulina1969} was much less studied than Oja's method; a notable exception is the work of \citet{balsubramani13}, which proved an expected $O(1/t)$ rate for both Oja's and Krasulina's algorithm for $1$-PCA. 
%(we also noticed a recent pre-print \citep{preprint:krasulina} that analyzes Krasulina's algorithm and obtained $O(1/\sqrt{t})$ convergence rate with high probability).
%However, Krasulina's method only applies to the special case where $k=1$.
%Our Matrix Krasulina's algorithm applies to general $k$-PCA and, under the explicit low-rank assumption, has exponential convergence rate.

There were very few theoretical analysis of stochastic $k$-PCA algorithms with $k>1$, with the exception of \citet{allen-zhu:stochastic-pca:2017, Shamir:general-oja:2016, pmlr-v49-balcan16a, pmlr-v51-li16b}. 
All had focused on variants of Oja's algorithm, among which
\citet{Shamir:general-oja:2016} was the only previous work, to the best of our knowledge, that provided a local exponential convergence rate guarantee of Oja's algorithm for $k\ge 1$. Their result holds for general data distribution, but their variant of Oja's algorithm, VR-PCA, requires several full passes over the datasets, and thus not fully online.

\paragraph{Open questions} 
In light of our result and related works, we have two open questions:
(1). While our analysis has focused on analyzing Algorithm \ref{algo:kpca} with a constant learning rate on low-rank data, we believe it can be easily adapted to show that with a $c/t$ (for some constant $c>0$) learning rate, the algorithm achieves $O(1/t)$ convergence on \textit{any datasets}. Note for the case $k^{\prime}=1$, the linear convergence rate of Algorithm \ref{algo:kpca} (original Krasulina's method) is already proved by \citet{balsubramani13}.
(2). Many real-world datasets are not strictly low-rank, but \textit{effectively low-rank} (see, for example, Figure \ref{fig:spectrum-vgg}): 
Informally, we say a dataset is effectively low-rank
%
%\label{eqn:effective-low-rank}
\textbf{if there exists $k\ll d$ such that
$
\frac{\sum_{i> k}\lambda_i}{\sum_{j\le k}\lambda_j}
\text{~is small} \, ,
$
}
We conjecture that our analysis can be adapted to show theoretical guarantee of Algorithm \ref{algo:kpca} on effectively low-rank datasets as well. In Section \ref{sec:experiments}, our empirical results support this conjecture. 
Formally characterizing the dependence of convergence rate on the ``effective low-rankness'' of a dataset can provide a smooth transition between the worst-case lower bound in \citet{Vu:minimax-subspace:2013} and our result in Theorem \ref{thm:main}.

\section{Sketch of analysis}
In this section, we provide an overview of our analysis and lay out the proofs that lead to Theorem \ref{thm:main} (the complete proofs are deferred to the Appendix).
% intuition
On a high level, our analysis is done in the following steps: 
\paragraph{Section~\ref{sec:conditional_martingale}} 
We show that if the algorithm's iterates, $W^t$, stay inside the basin of attraction, which we formally define as event $\mathcal{G}_t$,
\[
\mathcal{G}_t \Defeq \{\Delta^i \le 1-\tau, \forall i\le t\} \, ,
\] 
then a function of random variables $\Delta^t$ forms a supermartingale.
\paragraph{Section~\ref{sec:bound_bad_event}} 
We show that provided a good initialization, it is likely that \textit{the algorithm's outputs $W^1, \dots, W^t$ stay inside the basin of attraction for every $t$}. 
\paragraph{Section~\ref{sec:iter_wise}} We show that at each iteration $t$, conditioning on $\mathcal{G}_t$, $\Delta^{t+1}\le \beta \Delta^t$ for some $\beta < 1$ if we set the learning rate $\eta^t$ appropriately.
\paragraph{Appendix~\ref{sec:proof_thm}}
Iteratively applying this recurrence relation leads to Theorem \ref{thm:main}.

\paragraph{Additional notations:}
Before proceeding to our analysis, we introduce some technical notations for stochastic processes:
Let $(\mathcal{F}_t)$ denote the natural filtration (collection of $\sigma$-algebras) associated to the stochastic process, that is, the data stream $(X^t)$. 
Then by the update rule of  Algorithm \ref{algo:kpca}, for any $t$, $W^t$, $P^t$, and $\Delta^t$ are all $\mathcal{F}_t$-measurable, and $\mathcal{G}_t\in \mathcal{F}_t$.

%%%%
\subsection{A conditional supermartingale}
\label{sec:conditional_martingale}
Letting $M_i \Defeq \indic{\mathcal{G}_{i-1}}\exp{s\Delta^i}$, Lemma~\ref{lm:conditional} shows that
$(M_i)_{i\ge 1}$
forms a supermartingale.
%An important step in proving Proposition~\ref{prop:bad_event} is to show that 
%$\big(\indic{\mathcal{G}_{i-1}} \exp{s\Delta^i}\big)$ 
%forms a supermartingale, that is, inequality \eqref{eqn:supermartingale} holds. This is shown by Lemma \ref{lm:conditional}.
%
\begin{lm}[Supermartingale construction]
\label{lm:conditional}
Suppose $\mathcal{G}_0$ holds. 
Let $C^t$ and $Z$ be as defined in Proposition \ref{prop:iter_wise}. 
Then for any $i \leq t$, and for any constant $s>0$,
\begin{eqnarray*}
\expec{
\indic{\mathcal{G}_i} \exp{s\Delta^{i+1}} | \mathcal{F}_i} \\
\le
\indic{\mathcal{G}_{i-1}}
\exp{s\Delta^i \big(1-2\eta^{i+1} \lambda_k \tau+ (\eta^{i+1})^2 C^{i+1}\lambda_1 \big)
+2s^2(\eta^{i+1})^2 |Z|^2} \, .
\end{eqnarray*}
\end{lm}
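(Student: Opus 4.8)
The plan is to establish the bound in two stages: first an iteration-wise recursion for $\Delta^{i+1}$ in terms of $\Delta^i$ conditioned on $\mathcal{F}_i$, and then an exponential-moment (Chernoff-style) estimate that turns that recursion into the stated supermartingale inequality. I would start from Proposition~\ref{prop:iter_wise} (the iteration-wise analysis referenced in the statement), which I expect to provide a decomposition of the form
\[
\Delta^{i+1} \le \Delta^i\big(1 - 2\eta^{i+1}\lambda_k\tau + (\eta^{i+1})^2 C^{i+1}\lambda_1\big) + 2\eta^{i+1}\, Z^{i+1},
\]
where the first, "drift" part is $\mathcal{F}_i$-measurable (or at least its conditional expectation given $\mathcal{F}_i$ contributes the $-2\eta\lambda_k\tau\,\Delta^i$ contraction), and $Z^{i+1}$ is the zero-mean martingale-difference part coming from the stochastic gradient $s^{i+1}(r^{i+1})^\top$, bounded by $|Z|$. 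On the event $\mathcal{G}_i$ we also have $\Delta^i \le 1-\tau$, which is what keeps $C^{i+1}$ and the quadratic term under control.

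Next I would exponentiate: $\indic{\mathcal{G}_i}\exp{s\Delta^{i+1}} \le \indic{\mathcal{G}_i}\exp{s\Delta^i(1 - 2\eta^{i+1}\lambda_k\tau + (\eta^{i+1})^2 C^{i+1}\lambda_1)}\exp{2s\eta^{i+1} Z^{i+1}}$, and take $\condexpec{\cdot}{\mathcal{F}_i}$. The first factor is $\mathcal{F}_i$-measurable and comes out of the conditional expectation; crucially $\indic{\mathcal{G}_i} \le \indic{\mathcal{G}_{i-1}}$ since $\mathcal{G}_i \subseteq \mathcal{G}_{i-1}$, which is how the index drops from $i$ to $i-1$ on the right-hand side. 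For the remaining factor I would invoke a bounded-difference/sub-Gaussian exponential inequality: since $Z^{i+1}$ is conditionally mean-zero given $\mathcal{F}_i$ and $|Z^{i+1}| \le |Z|$ almost surely, Hoeffding's lemma gives $\condexpec{\exp{2s\eta^{i+1} Z^{i+1}}}{\mathcal{F}_i} \le \exp{\tfrac{1}{2}(2s\eta^{i+1})^2|Z|^2} = \exp{2s^2(\eta^{i+1})^2|Z|^2}$. Multiplying the two pieces yields exactly the claimed inequality.

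A couple of technical points will need care rather than cleverness. One is making sure the drift term in Proposition~\ref{prop:iter_wise} really has the sign structure above — i.e. that the conditional expectation of the cross term between the contraction and $Z^{i+1}$ either vanishes or is absorbed — so that after exponentiating the $\mathcal{F}_i$-measurable factor is genuinely $\exp{s\Delta^i(1 - 2\eta\lambda_k\tau + \eta^2 C^{i+1}\lambda_1)}$ and not something with an extra stochastic piece; this is where I'd be most careful, because $C^{i+1}$ may itself depend on $X^{i+1}$ and one must check it is either $\mathcal{F}_i$-measurable or bounded by an $\mathcal{F}_i$-measurable quantity on $\mathcal{G}_i$. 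The second is the indicator bookkeeping: one should verify $\indic{\mathcal{G}_i}\exp{s\Delta^{i+1}}$ is integrable (immediate, since on $\mathcal{G}_i$ we have $\Delta^{i+1}$ bounded via $\Delta^i \le 1-\tau$ and the a.s.\ bound $\|X\|^2 \le b$) so that the conditional expectations are well defined. The main obstacle is thus not the martingale argument itself but correctly extracting from the iteration-wise Proposition a clean split into an $\mathcal{F}_i$-measurable multiplicative contraction factor and a conditionally bounded, conditionally centered additive noise term to which Hoeffding applies.
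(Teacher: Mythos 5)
Your proposal is correct and follows essentially the same route as the paper: apply the iteration-wise bound (V2) of Proposition~\ref{prop:iter_wise} to get $\Delta^{i+1}\le \Delta^i\bigl(1-2\eta^{i+1}\lambda_k(1-\Delta^i)\bigr)-2\eta^{i+1}Z+(\eta^{i+1})^2C^{i+1}\lambda_1\Delta^i$, use $\indic{\mathcal{G}_i}$ to replace $1-\Delta^i$ by $\tau$ and then $\indic{\mathcal{G}_i}\le\indic{\mathcal{G}_{i-1}}$, exponentiate, pull out the $\mathcal{F}_i$-measurable factor (note $C^{i+1}$ is deterministic, so the concern you flag is moot), and apply Hoeffding's lemma to the conditionally centered, conditionally bounded $Z$ exactly as you describe.
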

The proof of Lemma~\ref{lm:conditional} utilizes the iteration-wise convergence inequality in Prop.~\ref{prop:iter_wise} of Section~\ref{sec:iter_wise}.
\subsection{Bounding probability of bad event $\mathcal{G}_t^c$}
\label{sec:bound_bad_event}
%The proof of Theorem \ref{thm:main} relies on event $\Omega_t$. In this section, we derive results used by Theorem \ref{thm:main} to show that $\proba{\Omega_t}$ is close to 1. 
Let $\mathcal{G}_0$ denote the good event happening upon initialization of Algorithm \ref{algo:kpca}. Observe that the good events form a nested sequence of subsets through time:
%\vspace{-0.2cm}
\[
\mathcal{G}_0 \supset \mathcal{G}_1 \supset\dots \mathcal{G}_t \supset\dots 
\vspace{-0.2cm}
\]
This implies that we can partition the bad event $\mathcal{G}_t^c$ into a union of individual bad events: 
\vspace{-0.2cm}
\[
\mathcal{G}_t^c = \cup_{i=1}^t \bigg(\mathcal{G}_{i-1}\setminus\mathcal{G}_{i} \bigg) \, ,
\]
The idea behind Proposition~\ref{prop:bad_event} is that, we first transform the union of events above into a maximal inequality over a suitable sequence of random variables, which form a supermartingale, and then we apply a type of martingale large-deviation inequality to upper bound $\proba{\mathcal{G}_t^c }$.
%
%%%%%%
\begin{prop}[Bounding probability of bad event]
\label{prop:bad_event}
Suppose the initialization condition in Theorem \ref{thm:main} holds. 
For any $\delta>0$, $t\ge 1$, and $i\le t$, 
%let
%
%\[
%C^{i}\Defeq kb + 2\eta^{i}b^2 + (\eta^{i})^2b^3 \, .
%\]
%
if the learning rate $\eta^i$ is set such that 
\begin{eqnarray*}
\eta^{i} 
\le 
\min
\bigg\{
\frac{2\lambda_k\tau}{(\frac{16}{1-\tau} \ln \frac{1}{\delta}(b+\|\Sigma^*\|_F)^2 + b(k+1)\lambda_1)},
\frac{\sqrt{2}-1}{b}
\bigg\}
\, ,
\end{eqnarray*}
Then 
$
\proba{\mathcal{G}_{t}^c}
\leq \delta \, .
$
\end{prop}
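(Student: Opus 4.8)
The plan is to convert the union-of-bad-events bound into a maximal inequality for a suitable supermartingale and then apply a martingale large-deviation estimate (an Azuma/Freedman-type bound, or more directly a bound via the multiplicative supermartingale already constructed in Lemma~\ref{lm:conditional}). First I would use the telescoping decomposition $\mathcal{G}_t^c = \cup_{i=1}^t (\mathcal{G}_{i-1}\setminus \mathcal{G}_i)$ given in the text: on the event $\mathcal{G}_{i-1}\setminus\mathcal{G}_i$ we have $\Delta^j \le 1-\tau$ for all $j\le i-1$ but $\Delta^i > 1-\tau$. Hence $\proba{\mathcal{G}_t^c} \le \proba{\exists i\le t:\ \indic{\mathcal{G}_{i-1}}\Delta^i > 1-\tau} = \proba{\max_{1\le i\le t} \indic{\mathcal{G}_{i-1}}\exp{s\Delta^i} > \exp{s(1-\tau)}}$ for any $s>0$. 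This reduces the task to controlling the running maximum of the process $M_i = \indic{\mathcal{G}_{i-1}}\exp{s\Delta^i}$, which is exactly the object Lemma~\ref{lm:conditional} is built around.

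Next I would verify that, under the stated learning-rate cap, the one-step bound in Lemma~\ref{lm:conditional} can be turned into a genuine supermartingale inequality $\expec{M_{i+1}\mid\mathcal{F}_i}\le M_i$. Looking at Lemma~\ref{lm:conditional}, on $\mathcal{G}_{i-1}$ the exponent is $s\Delta^i(1 - 2\eta\lambda_k\tau + \eta^2 C^{i+1}\lambda_1) + 2s^2\eta^2|Z|^2$; using $\Delta^i \le 1-\tau$ there, and choosing $s$ proportional to $1/\eta$ (the natural scaling, so that the $s^2\eta^2$ term is $O(1)$ and can be absorbed), the condition $\eta \le \frac{\sqrt 2 - 1}{b}$ controls $C^{i+1}$ (which I expect from Proposition~\ref{prop:iter_wise} to be a bounded quantity like $k+O(1)$ once $\eta b$ is small), and the condition $\eta \le \frac{2\lambda_k\tau}{\frac{16}{1-\tau}\ln\frac1\delta (b+\|\Sigma^*\|_F)^2 + b(k+1)\lambda_1}$ is precisely what makes the net drift $-2\eta\lambda_k\tau + \eta^2 C^{i+1}\lambda_1 + (\text{variance term})/(1-\tau)$ nonpositive after the $\Delta^i\le 1-\tau$ substitution. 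So $(M_i)$ is a nonnegative supermartingale with $M_0 = \indic{\mathcal{G}_0}\exp{s\Delta^0} \le \exp{s(1-\tau)}$ by the initialization hypothesis $tr(U^*P^o)\ge k - \frac{1-\tau}{2}$ (which gives $\Delta^0 \le \frac{1-\tau}{2}$, hence $\mathcal{G}_0$ holds and $M_0 \le \exp{s\frac{1-\tau}{2}}$ — the slack of $\frac{1-\tau}{2}$ versus $1-\tau$ is what yields the $\delta$).

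Then I would apply Doob's maximal inequality for nonnegative supermartingales: $\proba{\max_{1\le i\le t} M_i \ge a} \le \expec{M_0}/a \le \exp{s\frac{1-\tau}{2}}/\exp{s(1-\tau)} = \exp{-s\frac{1-\tau}{2}}$. Choosing $s$ large enough — concretely $s = \frac{2}{1-\tau}\ln\frac1\delta$, which is the same scale that appears in the learning-rate bound — makes the right-hand side at most $\delta$, giving $\proba{\mathcal{G}_t^c}\le\delta$ as claimed. I would need to double-check that this choice of $s$ is consistent with the $s \sim 1/\eta$ scaling used to absorb the variance term in the supermartingale step; the $(b+\|\Sigma^*\|_F)^2$ and the $\frac{16}{1-\tau}\ln\frac1\delta$ factors in the $\eta$ bound strongly suggest the intended coupling is $s\eta \asymp$ const with the constants chosen so both the drift condition and $\exp{-s(1-\tau)/2}\le\delta$ hold simultaneously.

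\textbf{Main obstacle.} The delicate part is the bookkeeping in the supermartingale step: one must simultaneously (i) pick $s$ as a function of $\eta$ (or vice versa) so that $2s^2\eta^2|Z|^2$, after bounding $|Z|$ by something like $b+\|\Sigma^*\|_F$, is dominated by the linear drift $2s\eta\lambda_k\tau\Delta^i$ once $\Delta^i\le 1-\tau$ is used, and (ii) ensure that same $s$ makes Doob's inequality give $\le\delta$. The three-way interaction between $\eta$, $s$, and $\delta$ — and in particular confirming that the $\frac{16}{1-\tau}\ln\frac1\delta(b+\|\Sigma^*\|_F)^2$ term in the hypothesis is exactly the term needed to close this — is where the real care lies; everything else (the telescoping, Doob's inequality, plugging in the initialization bound) is routine.
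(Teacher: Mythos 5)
Your proposal is correct and follows essentially the same route as the paper: the telescoping decomposition of $\mathcal{G}_t^c$, reduction to a maximal inequality for $M_i=\indic{\mathcal{G}_{i-1}}\exp{s\Delta^i}$, the supermartingale property via Lemma~\ref{lm:conditional} under the stated learning-rate cap, Doob's inequality, and the choice $s=\frac{2}{1-\tau}\ln\frac{1}{\delta}$ combined with the initialization slack $\Delta^0\le\frac{1-\tau}{2}$. The one detail to settle in your promised double-check is that absorbing $2s^2(\eta)^2|Z|^2$ into the multiplicative factor $s\Delta^i(\cdots)$ requires the full bound $|Z|\le 2(b+\|\Sigma^*\|_F)\sqrt{\Delta^i}$ from Proposition~\ref{prop:iter_wise} (V2) — the $\sqrt{\Delta^i}$ factor, which your narration drops, is essential, since a constant bound on $|Z|$ could not be dominated by a drift proportional to $\Delta^i$ when $\Delta^i$ is small.
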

\begin{proof}[Proof Sketch]
For $i>1$, we first consider the individual events:
\begin{eqnarray*}
\mathcal{G}_{i-1}\setminus \mathcal{G}_{i}
=
\mathcal{G}_{i-1} \cap  \mathcal{G}_{i}^c
=
\{
\forall j < i,~
\Delta^j
\le
1-\tau 
\}
\cap
\{
\Delta^i
>
1-\tau 
\}
\end{eqnarray*}
For any strictly increasing positive measurable function $g$, the above is equivalent to
\[
\mathcal{G}_{i-1}\setminus \mathcal{G}_{i}
=
\{
g(\Delta^i)
>
g(1-\tau)
\text{~~and~~}
\forall j < i,~
g(\Delta^j)
\le
g(1-\tau)
\}
\]
Since event $\mathcal{G}_{i-1}$ occurs is equivalent to $\{\indic{\mathcal{G}_{i-1}}=1\}$, we can write
\[
\mathcal{G}_{i-1}\setminus \mathcal{G}_{i}
=
\{
g(\Delta^i)
>
g(1-\tau)
\text{~~and~~}
\forall j < i,~
g(\Delta^j)
\le
g(1-\tau),
\text{~~and~~}
\indic{\mathcal{G}_{i-1}}=1
\}
\]
Additionally, since for any $j^{\prime}<j$, $\mathcal{G}_{j^{\prime}} \supset \mathcal{G}_{j}$, that is,
$\{\indic{\mathcal{G}_{j}}=1\}$ implies $\{\indic{\mathcal{G}_{j^{\prime}}}=1\}$, we have
\begin{eqnarray*}
\mathcal{G}_{i-1}\setminus \mathcal{G}_{i} \\
= 
\{g(\Delta^i)>g(1-\tau)
\text{~and~}
\forall j < i,~
g(\Delta^j)
\le
g(1-\tau), 
\indic{\mathcal{G}_{i-1}}=1,
\indic{\mathcal{G}_{j^{\prime}}}=1, \forall j^{\prime}<i-1
\} \\
=
\{\indic{\mathcal{G}_{i-1}} g(\Delta^i)>g(1-\tau) \text{~and~} \forall j<i, \indic{\mathcal{G}_{j-1}}g(\Delta^j)  \le g(1-\tau), \text{~and~}\indic{\mathcal{G}_{j}}=1\} \\
\subset
\{\indic{\mathcal{G}_{i-1}} g(\Delta^i)>g(1-\tau) \text{~and~}   \forall j<i, \indic{\mathcal{G}_{j-1}}g(\Delta^j)  \le g(1-\tau)\} 
\end{eqnarray*}
So the union of the terms $\mathcal{G}_{i-1}\setminus \mathcal{G}_{i}$ can be upper bounded as
\begin{eqnarray*}
\cup_{i=1}^t \mathcal{G}_{i-1}\setminus \mathcal{G}_{i} \subset \\
\cup_{i=2}^t
\{\indic{\mathcal{G}_{i-1}} g(\Delta^i)>g(1-\tau) ,  \indic{\mathcal{G}_{j-1}}g(\Delta^j)  \le g(1-\tau), \forall 1\le j<i\}  \\
\cup
\{\indic{\mathcal{G}_{0}} g(\Delta^1)>g(1-\tau) \}
\end{eqnarray*}
%%%
%%%
Observe that the event above can also be written as
\[
\{\sup_{1\le i\le t} \indic{\mathcal{G}_{i-1}} g(\Delta^i)
>
g(1-\tau)\} \, .
\]
We upper bound the probability of the event above
by applying a variant of Doob's inequality.
To achieve this, the key step is to find a suitable function $g$ such that the sequence
\[
\indic{\mathcal{G}_{0}} g(\Delta^1), \indic{\mathcal{G}_{1}} g(\Delta^2), \dots, \indic{\mathcal{G}_{i-1}} g(\Delta^i), \dots
\]
forms a supermartingale.
Via Lemma \ref{lm:conditional}, we show that if we choose $g(x)\Defeq \exp{sx}$ for any constant $s>0$, then
\begin{eqnarray}
\label{eqn:supermartingale}
\expec{\indic{\mathcal{G}_i} \exp{s\Delta^{i+1}} | \mathcal{F}_i}
\le
\indic{\mathcal{G}_{i-1}}
\exp{s\Delta^i} \, ,
\end{eqnarray}
provided we choose the learning rate in Algorithm \ref{algo:kpca} appropriately.
%The inner expectation is (conditional) moment generating function of $\Delta^i$.
%We adapt technical tools in \citet{balsubramani13} that handle moment generating functions to upper bound this quantity (Lemma \ref{lm:conditional}).
Then a version of Doob's inequality for supermartingale \citep[p. 231]{balsubramani13, Durrett11probability:theory} implies that
\begin{eqnarray*}
\proba{\sup_i \indic{\mathcal{G}_{i-1}} \exp{s\Delta^i}
>
\exp{s(1-\tau)}} 
\le
\frac{\expec{\indic{\mathcal{G}_{0}} \exp{s\Delta^1}}}{\exp{s(1-\tau)}} \, ,
\end{eqnarray*}
Finally, bounding the expectation on the RHS using our assumption on the initialization condition finishes the proof.
\end{proof}
%
%
%
%\subsubsection*{Lemma for Proposition \ref{prop:bad_event} and \ref{prop:iter_wise}}
%%%%%% begin analysis
\subsection{Iteration-wise convergence result}
\label{sec:iter_wise}
\begin{prop}[Iteration-wise subspace improvement]
%{\color{red} Need to change bound on $C$ and $|Z|$}
\label{prop:iter_wise}
At the $t+1$-th iteration of Algorithm \ref{algo:kpca}, 
the following holds:
\begin{itemize}
\item[(V1)]
Let
$
C^{t}\Defeq kb + 2\eta^{t}b^2 + (\eta^{t})^2b^3 \, .
$
Then
\begin{eqnarray*}
\expec{tr(U^* P^{t+1})|\mathcal{F}_t} 
\geq 
tr(U^* P^t)  
+
2\eta^{t+1} \lambda_k \Delta^t (1-\Delta^t)
-
(\eta^{t+1})^2 C^{t+1} \lambda_1\Delta^t
\end{eqnarray*}
\item[(V2)]
There exists a random variable $Z$, 
with
\[
\expec{Z | \mathcal{F}_t} = 0
\text{~~and~~}
|Z| %\leq 2 (\|x\|^2 +  \|\Sigma^*\|_F) \sqrt{tr(U^*-U^*P^t)}
\le
%2 (r +  \|\Sigma^*\|_F) \sqrt{tr(U^*-U^*P^t)}
2 (b +  \|\Sigma^*\|_F) \sqrt{\Delta^t}
\]
such that
\begin{eqnarray*}
tr(U^* P^{t+1}) 
\geq 
tr(U^* P^{t})  
+
2\eta^{t+1} \lambda_k \Delta^t (1-\Delta^t) 
+ 2\eta^{t+1} Z
- (\eta^{t+1})^2 C^{t+1} \lambda_1\Delta^t
\end{eqnarray*}
\end{itemize}
\end{prop}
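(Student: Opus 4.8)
The plan is to substitute the update directly into the projector $P^{t+1} = (W^{t+1})^{\top}(W^{t+1}(W^{t+1})^{\top})^{-1}W^{t+1}$ and track $tr(U^{*}P^{t+1})$. After the orthonormalization step $W^{t}$ has orthonormal rows, so $P^{t} = (W^{t})^{\top}W^{t}$, $W^{t}(I-P^{t}) = 0$, and the update reads $W^{t+1} = W^{t} + \eta^{t+1}s^{t+1}(r^{t+1})^{\top}$ with $s^{t+1} = W^{t}X^{t+1}\in\mathbb{R}^{k^{\prime}}$ and $r^{t+1} = (I-P^{t})X^{t+1}\in\mathbb{R}^{d}$. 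The first key point is that in the Gram matrix the cross terms cancel, because $W^{t}r^{t+1} = W^{t}(I-P^{t})X^{t+1} = 0$; hence $W^{t+1}(W^{t+1})^{\top} = I_{k^{\prime}} + (\eta^{t+1})^{2}\|r^{t+1}\|^{2}\,s^{t+1}(s^{t+1})^{\top}$ is a rank-one bump of the identity. Since $\big(I_{k^{\prime}} + (\eta^{t+1})^{2}\|r^{t+1}\|^{2}s^{t+1}(s^{t+1})^{\top}\big)^{-1} \succeq (1-(\eta^{t+1})^{2}\|r^{t+1}\|^{2}\|s^{t+1}\|^{2})I_{k^{\prime}}$ and $U^{*}\succeq 0$,
\[
tr(U^{*}P^{t+1}) \ \ge\ \big(1 - (\eta^{t+1})^{2}\|r^{t+1}\|^{2}\|s^{t+1}\|^{2}\big)\,tr\big(W^{t+1}U^{*}(W^{t+1})^{\top}\big).
\]

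Next I would expand, using $(W^{t})^{\top}s^{t+1} = P^{t}X^{t+1}$,
\[
tr\big(W^{t+1}U^{*}(W^{t+1})^{\top}\big) = tr(U^{*}P^{t}) + 2\eta^{t+1}(X^{t+1})^{\top}P^{t}U^{*}(I-P^{t})X^{t+1} + (\text{a nonnegative }\eta^{2}\text{ term}),
\]
and drop the last term for a lower bound. The heart of the argument is the conditional mean of the driving term: using $X^{t+1}$ independent of $\mathcal{F}_{t}$, $U^{*} = \sum_{i\le k}u_{i}u_{i}^{\top}$, and $\Sigma^{*} = \sum_{i\le k}\lambda_{i}u_{i}u_{i}^{\top}$,
\[
\expec{(X^{t+1})^{\top}P^{t}U^{*}(I-P^{t})X^{t+1}|\mathcal{F}_{t}} = tr\big(P^{t}U^{*}(I-P^{t})\Sigma^{*}\big) = \sum_{i=1}^{k}\lambda_{i}\,\|(I-U^{*})P^{t}u_{i}\|^{2} \ \ge\ \lambda_{k}\big(tr(U^{*}P^{t}) - tr((U^{*}P^{t})^{2})\big).
\]
To close this piece I would prove the spectral inequality $tr(N) - tr(N^{2}) \ge \Delta^{t}(1-\Delta^{t})$ for $N \Defeq W^{t}U^{*}(W^{t})^{\top}$, which is PSD, satisfies $N\preceq I_{k^{\prime}}$, $rank(N)\le k$, and $tr(N) = tr(U^{*}P^{t}) = k-\Delta^{t}$: when $\Delta^{t}\ge 1$ it is trivial since $tr(N)-tr(N^{2}) = \sum_{j}\mu_{j}(1-\mu_{j})\ge 0$; when $\Delta^{t}<1$, $N$ must have exactly $k$ nonzero eigenvalues (otherwise $\Delta^{t} = k - tr(N) \ge k - rank(N)\ge 1$), whose deviations from $1$ sum to $\Delta^{t}$, so Cauchy--Schwarz gives $tr(N)-tr(N^{2}) = \Delta^{t} - \sum_{j}(1-\mu_{j})^{2} \ge \Delta^{t} - (\Delta^{t})^{2}$. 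Since $tr((U^{*}P^{t})^{2}) = tr(N^{2})$, this produces the $2\eta^{t+1}\lambda_{k}\Delta^{t}(1-\Delta^{t})$ term.

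It remains to bound the $\eta^{2}$-order error with the claimed $\Delta^{t}$-dependence, and here the low-rank assumption enters. In conditional expectation, $\expec{\|r^{t+1}\|^{2}|\mathcal{F}_{t}} = tr((I-P^{t})\Sigma^{*}) = \sum_{i\le k}\lambda_{i}\|(I-P^{t})u_{i}\|^{2} \le \lambda_{1}\,tr(U^{*}(I-P^{t})) = \lambda_{1}\Delta^{t}$; together with the almost-sure bounds $\|s^{t+1}\|^{2} = (X^{t+1})^{\top}P^{t}X^{t+1}\le b$ and $tr(W^{t+1}U^{*}(W^{t+1})^{\top}) \le k + 2\eta^{t+1}b + (\eta^{t+1})^{2}b^{2}$ (bound each summand by $\|X^{t+1}\|^{2}\le b$), the multiplicative error is at most $(\eta^{t+1})^{2}C^{t+1}\lambda_{1}\Delta^{t}$ in conditional expectation, which gives (V1). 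For (V2) I would not average the driving term but keep its fluctuation, setting $Z \Defeq (X^{t+1})^{\top}P^{t}U^{*}(I-P^{t})X^{t+1} - tr(P^{t}U^{*}(I-P^{t})\Sigma^{*})$ (together with the zero-mean part of the $\eta^{2}$-order error), so $\expec{Z|\mathcal{F}_{t}}=0$ and the same chain runs pathwise, using that $rank(\Sigma^{*})=k$ forces $X^{t+1}\in\mathrm{range}(U^{*})$ a.s., hence $\|r^{t+1}\|^{2} = \|(I-P^{t})U^{*}X^{t+1}\|^{2}\le \|(I-P^{t})U^{*}\|_{F}^{2}\|X^{t+1}\|^{2} \le b\Delta^{t}$ since $\|(I-P^{t})U^{*}\|_{F}^{2} = tr(U^{*}(I-P^{t})) = \Delta^{t}$. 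The bound on $|Z|$ then follows from $|(X^{t+1})^{\top}P^{t}U^{*}(I-P^{t})X^{t+1}| \le \|P^{t}X^{t+1}\|\,\|U^{*}(I-P^{t})X^{t+1}\| \le \sqrt{b}\cdot\sqrt{b\Delta^{t}}$ and $|tr(P^{t}U^{*}(I-P^{t})\Sigma^{*})| \le \|P^{t}U^{*}(I-P^{t})\|_{F}\|\Sigma^{*}\|_{F} \le \sqrt{\Delta^{t}}\|\Sigma^{*}\|_{F}$, i.e.\ $|Z| \le 2(b+\|\Sigma^{*}\|_{F})\sqrt{\Delta^{t}}$.

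The main obstacle I anticipate is the driving-term analysis — establishing that the conditional mean lower-bounds to $\lambda_{k}\Delta^{t}(1-\Delta^{t})$ via the spectral inequality $tr(N)-tr(N^{2})\ge\Delta^{t}(1-\Delta^{t})$, which hinges on the full structure of $N$ (PSD, $\preceq I_{k^{\prime}}$, trace $k-\Delta^{t}$, rank $\le k$) and in particular on the rank bound once $k^{\prime}>k$ — together with the careful bookkeeping of the $\eta^{2}$-order terms: deciding which are nonnegative and droppable, which are controlled pathwise by $b$, and which must be routed through $\expec{\|r^{t+1}\|^{2}|\mathcal{F}_{t}}\le\lambda_{1}\Delta^{t}$ or $\|(I-P^{t})U^{*}\|_{F}^{2}=\Delta^{t}$ to produce the $\Delta^{t}$ factor in the error term and in $|Z|$.
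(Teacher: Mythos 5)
Your proposal is correct and follows the same overall architecture as the paper's proof: the rank-one Gram bump $W^{t+1}(W^{t+1})^{\top}=I_{k'}+(\eta^{t+1})^2\|r^{t+1}\|^2 s^{t+1}(s^{t+1})^{\top}$ and the resulting inverse bound (the paper's Lemma~\ref{lm:inverse_matrix_bound}), the expansion of $tr(U^*(W^{t+1})^{\top}W^{t+1})$ with the nonnegative $\eta^2$ term dropped, the identity $\expec{s^{t+1}(r^{t+1})^{\top}\mid\mathcal{F}_t}=W^t\Sigma^*(I-P^t)$ for the driving term, and the routing of the quadratic error through $\expec{\|r^{t+1}\|^2\mid\mathcal{F}_t}=tr(\Sigma^*(I-P^t))\le\lambda_1\Delta^t$. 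Where you genuinely diverge is the key lower bound $tr(P^tU^*(I-P^t)\Sigma^*)\ge\lambda_k\Delta^t(1-\Delta^t)$: the paper (Lemma~\ref{lm:stationary_pts}) expands everything in the eigenbasis $\{u_i\}$ and applies Cauchy--Schwarz to the off-diagonal overlaps $(u_i^{\top}P^tu_j)^2$, arriving at $\sum_i\lambda_i(1-u_i^{\top}P^tu_i)(1-\Delta^t)$, whereas you first peel off $\lambda_k$ using the termwise nonnegativity of $\lambda_i\|(I-U^*)P^tu_i\|^2$ and then prove the basis-free spectral inequality $tr(N)-tr(N^2)\ge\Delta^t(1-\Delta^t)$ for $N=W^tU^*(W^t)^{\top}$, using $0\preceq N\preceq I_{k'}$, $rank(N)\le k$, $tr(N)=k-\Delta^t$. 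Your version is cleaner and, unlike the paper's final step (which multiplies by $1-\Delta^t$ and so silently assumes $\Delta^t\le 1$), is valid for all $\Delta^t$; the rank observation that lets you restrict to $k$ eigenvalue slots is exactly the right ingredient when $k'>k$. (Minor nit: $\sum_j(1-\mu_j)^2\le(\sum_j(1-\mu_j))^2$ is just nonnegativity of cross terms, not Cauchy--Schwarz.)

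One caveat on (V2), which you partly flag yourself: the stated inequality is pathwise, but the $-(\eta^{t+1})^2C^{t+1}\lambda_1\Delta^t$ error term is only justified in conditional expectation. Your pathwise substitute $\|r^{t+1}\|^2\le\|(I-P^t)U^*\|_F^2\|X^{t+1}\|^2\le b\Delta^t$ (using $X\in\mathrm{range}(U^*)$ a.s.) is correct but yields $b$ in place of $\lambda_1$, and since $\lambda_1\le\expec{\|X\|^2}\le b$ this is weaker than the stated constant; absorbing the quadratic fluctuation into $Z$ instead would enlarge $|Z|$ by an $O((\eta^{t+1})^2)$ amount. The paper's own proof of (V2) is no more careful --- it simply says ``similarly'' and defines $Z$ as the fluctuation of the linear term alone --- so this is an inherited imprecision in constants rather than a gap specific to your argument; your $|Z|\le(b+\|\Sigma^*\|_F)\sqrt{\Delta^t}$ bound leaves a factor-of-two margin against the claimed $2(b+\|\Sigma^*\|_F)\sqrt{\Delta^t}$ that could be spent here.
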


\begin{proof}[Proof Sketch]
By definition,
\begin{equation*}
tr (U^* P^{t+1})
=
tr (U^* (W^{t+1})^{\top} (W^{t+1} (W^{t+1})^{\top})^{-1} W^{t+1}) \, ,
\end{equation*}
where by the update rule of Algorithm \ref{algo:kpca}
\[
W^{t+1} = W^t + \eta^{t+1} s^{t+1} (r^{t+1})^{\top} \, .
\]
We first derive (V1); 
%$
%\expec{tr(U^* P^{t+1})|W^t} \, ;
%$
the proof sketch is as follows:
\begin{enumerate}
\item
Since the rows of $W^{t}$ are orthonormalized, one would expect that a small perturbation of this matrix, $W^{t+1}$, is also close to orthonormalized, and thus $W^{t+1}(W^{t+1})^{\top}$ should be close to an identity matrix. 
Lemma~\ref{lm:inverse_matrix_bound} shows this is indeed the case, offsetting by a small term $E$, which can be viewed as an error/excessive term:
\begin{lm}[Inverse matrix approximation]
\label{lm:inverse_matrix_bound}
%{\color{red} checked}
Let $k^{\prime}$ be the number of rows in $W^t$. 
Suppose the rows of $W^t$ are orthonormal, that is,
$
W^t (W^t)^{\top} = I_{k^{\prime}} \, .
$
Then for
$
W^{t+1} = W^t + \eta^{t+1} s^{t+1} (r^{t+1})^{\top} \, ,
$
we have
%Suppose $\eta^{t+1}$ is sufficiently small, then
\[
(W^{t+1} (W^{t+1})^{\top})^{-1} \succeq (1 - \lambda_1(E)) I_{k^{\prime}} \, ,
\]
where 
$\lambda_1(E)$ is the largest eigenvalue of some matrix $E$, and
$\lambda_1(E) = (\eta^{t+1})^2 \|r^{t+1}\|^2 \|s^{t+1}\|^2 \, .$
%\[
%tr(\Sigma^* (W^{t+1})^{\top}) (W^{t+1} (W^{t+1})^{\top})^{-1} W^{t+1} 
%\geq
%tr \Sigma^* (W^{t+1})^{\top} (I_m - diag(\lambda_1(E)))W^{t+1}
%\]
\end{lm}
This implies that
\begin{eqnarray*}
tr U^* (W^{t+1})^{\top} (W^{t+1} (W^{t+1})^{\top})^{-1} W^{t+1} \\
\geq
(1- (\eta^{t+1})^2 \|r^{t+1}\|^2 \|s^{t+1}\|^2) tr (U^*(W^{t+1} )^{\top} W^{t+1}) 
\end{eqnarray*}
\item
We continue to lower bound the conditional expectation of the last term in the previous inequality as
\begin{eqnarray*}
\expec{tr(U^* (W^{t+1})^{\top} W^{t+1})|\mathcal{F}_t} 
\geq 
tr(U^* P^t) 
%+ 
%\eta^{t+1} tr (U^* (I-P^t) \Sigma^* (W^t)^{\top} W^t) \\
%+
%\eta^{t+1} tr (U^* (W^t)^{\top} W^t \Sigma^*(I-P^t))\\
%=
%tr(U^* P^t) 
+ 
2\eta^{t+1} tr (U^* P^t \Sigma^*(I_d-P^t))
\end{eqnarray*} 
\item
The last term in the inequality above, 
$
tr (U^* P^t \Sigma^*(I_d-P^t)) \, ,
$
controls the improvement in proximity between the estimated and the ground-truth subspaces.
In Lemma~\ref{lm:stationary_pts}, we lower bound it as a function of $\Delta^t$:
\begin{lm}[Characterization of stationary points] 
\label{lm:stationary_pts}
Let 
\[
\Gamma^t\Defeq tr (U^*P^t \Sigma^*(I_d-P^t)) \, ,
\]
Then the following holds:
\begin{enumerate}
\item
%The ground-truth solution is a stationary point, that is,
$
tr(U^* P^t) = tr(U^*) 
$ implies that
$
\Gamma^t = 0 \, .
$
%\item
%For any stationary point $W$, that is, $tr (P U_j(I-P)U_j ) = 0$, we have
%\[
% tr(U) - tr(U P) \leq  2\sum_{i\geq 2} \lambda_i
%\]
%%where $\lambda_i$ is the $i$-th eigenvalue of $U$ (the eigenvalues are ordered decreasingly from $\lambda_1$ to $\lambda_d$).
\item
%In general, we can lower bound $\Gamma^t$ as
$
\Gamma^t 
%&\ge&
%\sum_{i=1}^{k} \lambda_i (1-u_i^{\top}P^tu_i)
%(u_i^{\top}P^tu_i - \sum_{j\ne i} [1-u_j^{\top}P^tu_j]) \\
\ge
\lambda_k \Delta^t (1-\Delta^t) \, .
$
\end{enumerate}
\end{lm}
\item
Finally, combining the results above, we obtain (V1) inequality in the statement of the proposition.
\end{enumerate}
(V2) inequality is derived similarly with the steps above, except that at step 2, instead of considering the conditional expectation of $tr(U^* (W^{t+1})^{\top} W^{t+1})$, we explicitly represent the zero-mean random variable $Z$ in the inequality.
\end{proof}
\section{Experiments}
\label{sec:experiments}
In this section, we present our empirical evaluation of Algorithm \ref{algo:kpca}. We first verified its performance on simulated low-rank data and effectively low-rank data, and then we evaluated its performance on two real-world effectively low-rank datasets.
%In all experiments, we initialized the matrix $W^0$ randomly and found that our local convergence result empirically holds even with randomly initialized matrix.

\begin{figure*}[ht]
\centering
\begin{minipage}[b]{0.33\textwidth}
  \centering
  \includegraphics[width=\textwidth]{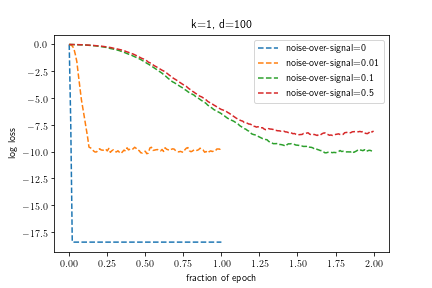}
  \caption*{$k=1,\, d=100$}
\end{minipage}%
\hfill
\begin{minipage}[b]{0.33\textwidth}
  \centering
  \includegraphics[width=\textwidth]{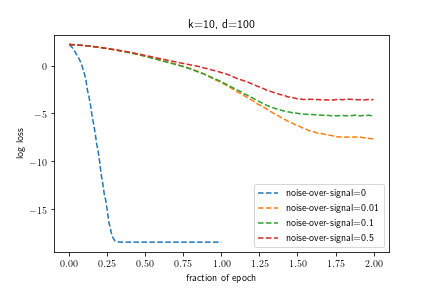}
  \caption*{$k=10,\, d=100$}
\end{minipage}
\hfill
\begin{minipage}[b]{0.33\textwidth}
  \centering
  \includegraphics[width=\textwidth]{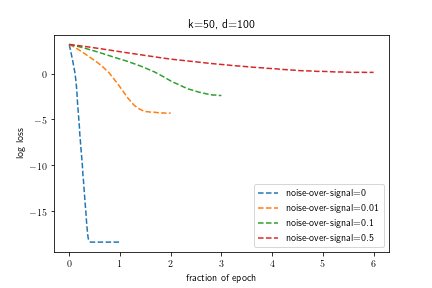}
  \caption*{$k=50,\, d=100$}
\end{minipage}
\begin{minipage}[b]{0.33\textwidth}
  \centering
  \includegraphics[width=\linewidth]{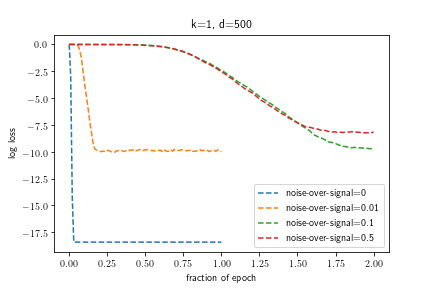}
  \caption*{$k=1,\, d=500$}
\end{minipage}%
\hfill
\begin{minipage}[b]{0.33\textwidth}
  \centering
  \includegraphics[width=\linewidth]{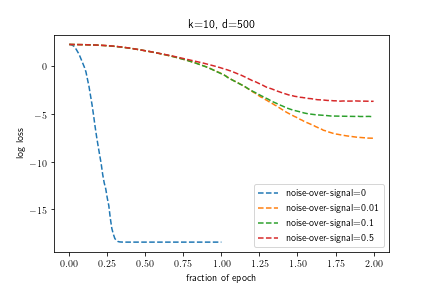}
  \caption*{$k=10,\, d=500$}
\end{minipage}
\hfill
\begin{minipage}[b]{0.33\textwidth}
  \centering
  \includegraphics[width=\linewidth]{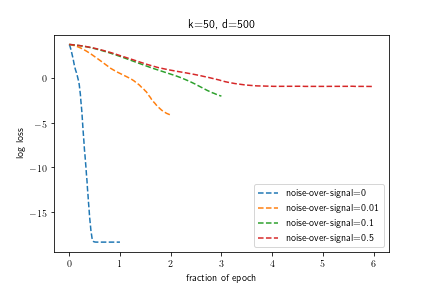}
  \caption*{$k=50,\, d=500$}
\end{minipage}
\caption{
log-convergence graph of Algorithm \ref{algo:kpca}:
$\ln(\Delta^t)$ vs $t$ at different levels of noise-over-signal ratio ($\frac{\sum_{i>k}\lambda_i}{\sum_{j\le k}\lambda_j}$)}
\label{fig:fig-sim}
\end{figure*}

\begin{figure*}[ht]
\begin{minipage}[t]{0.33\textwidth}
  \centering
  \includegraphics[width=\linewidth, scale=0.8]{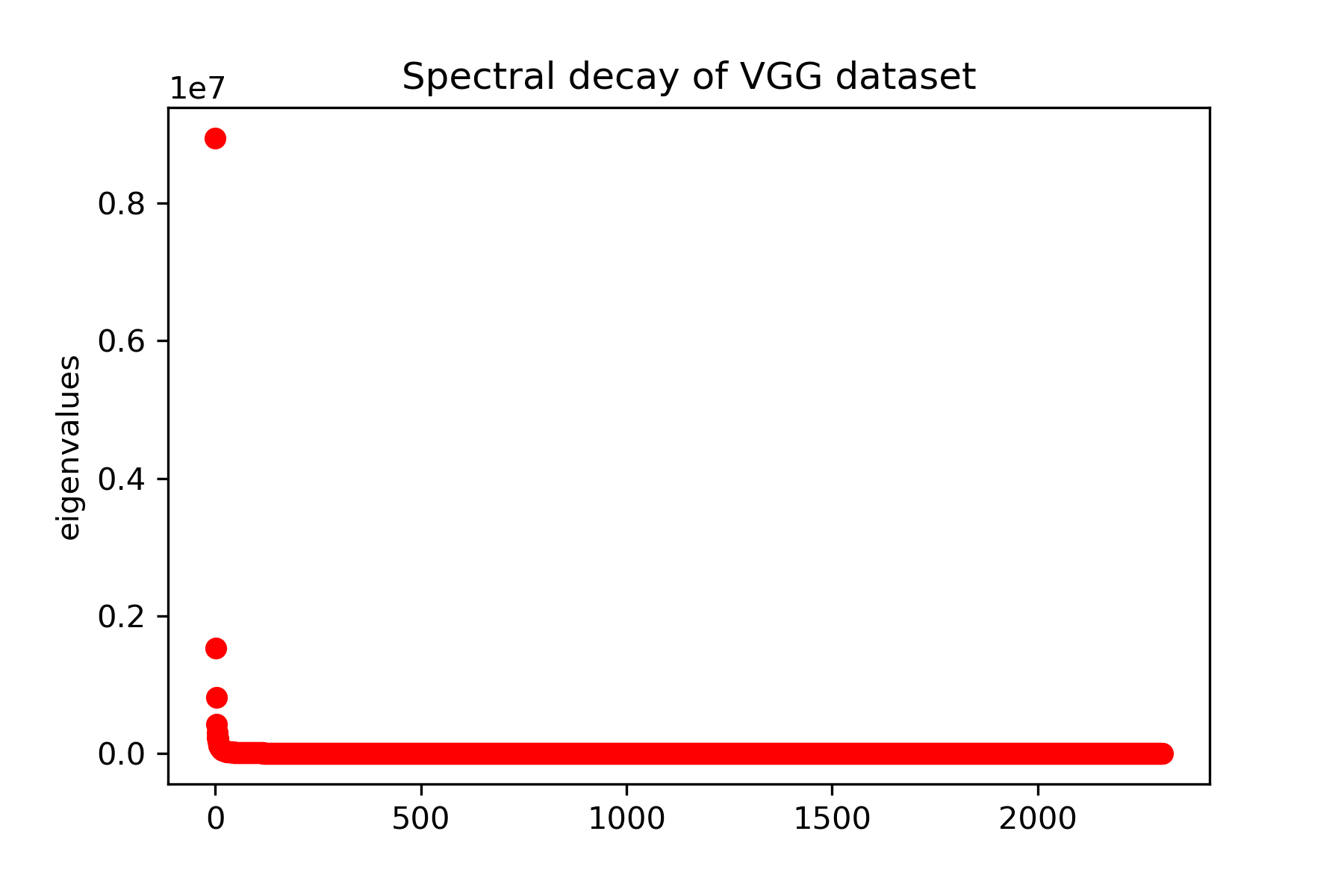}
   %\vspace{-1cm}
  \caption{top 6 eigenvalues explains 80\% of the data variance.}
  \label{fig:spectrum-vgg}
  %\vspace{-0.4cm}
\end{minipage}
\begin{minipage}[t]{0.33\textwidth}
  \centering
  \includegraphics[width=\linewidth]{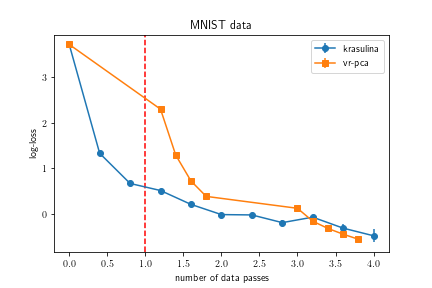}
 \caption*{MNIST ($d=784; k^{\prime}=44$)}
\end{minipage}%
\begin{minipage}[t]{0.33\textwidth}
  \centering
  \includegraphics[width=\linewidth]{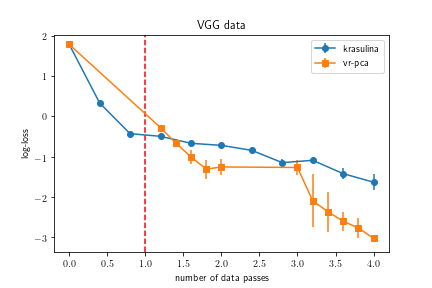}
  \caption*{VGG ($d=2304; k^{\prime}=6$); red vertical line marks a full pass over the dataset}
\end{minipage}
%\caption{
%log-convergence graph on real datasets: red vertical line marks a full pass over the dataset
%}
\label{fig:fig-real}
\end{figure*}
\vspace{-0.5cm}
\subsection{Simulations}
The low-rank data is generated as follows: we sample i.i.d. standard normal on the first $k$ coordinates of the $d$-dimensional data (the rest $d-k$ coordinates are zero), then we rotate all data using a random orthogonal matrix (unknown to the algorithm). 
\paragraph{Simulating effectively low-rank data} In practice, hardly any dataset is strictly low-rank but many datasets have sharply decaying spectra (recall Figure~\ref{fig:spectrum-vgg}).
%\begin{eqnarray}
%\label{eqn:effective-low-rank}
%\text{There exists~} k\ll d \text{~such that~~} 
%\frac{\sum_{i> k}\lambda_i}{\sum_{j\le k}\lambda_j}
%\text{~is small} \, ,
%\end{eqnarray}
Although our Theorem \ref{thm:main} is developed under a strict low-rankness assumption, here we empirically test the robustness of our convergence result when data is not strictly low rank but only effectively low rank. Let $\lambda_1,\ge\dots\ge\lambda_d\ge 0$ be the spectrum of a covariance matrix. For a fixed $k\in [d]$, we let
\textit{noise-over-signal} $\Defeq \frac{\sum_{i> k}\lambda_i}{\sum_{j\le k}\lambda_j}\, .$
The \textit{noise-over-signal} ratio intuitively measures how ``close'' the matrix is to a rank-$k$ matrix: The smaller the number is, the shaper the spectral decay; when the ratio equals zero, the matrix is of rank at most $k$.
In our simulated data, we perturb the spectrum of a strictly rank-$k$ covariance matrix and generate data with full-rank covariance matrices at the following noise-over-signal ratios, 
$\{0, 0.01, 0.1, 0.5\}$.

\paragraph{Results}
Figure \ref{fig:fig-sim} shows the \textit{log-convergence graph} of Algorithm \ref{algo:kpca} on our simulated data: In contrast to the local initialization condition in Theorem \ref{thm:main}, we initialized Algorithm \ref{algo:kpca} with a random matrix $W^o$ and ran it for one or a few epochs, each consists of $5000$ iterations. 
(1). We verified that, on strictly low rank data (noise-over-signal$=0$), the algorithm indeed has an exponentially convergence rate (linear in log-error);
(2). As we increase the noise-over-signal ratio, the convergence rate gradually becomes slower;
(3). The convergence rate is not affected by the actual data dimension $d$, but only by the intrinsic dimension $k$, as predicted by Theorem \ref{thm:main}.

\subsection{Real effectively low-rank datasets}
We take a step further to test the performance of Algorithm \ref{algo:kpca} on two real-world datasets: 
VGG \citep{dataset-vgg} is a dataset of 10806 image files from 2622 distinct celebrities crawled from the web, with $d=2304$.
For MNIST \citep{dataset-mnist}, we use the 60000 training examples of digit pixel images, with $d=784$.
Both datasets are full-rank, but we choose $k^{\prime}$ such that the noise-over-signal ratio at $k^{\prime}$ is 0.25; that is, the top $k^{\prime}$ eigenvalues explain 80\% of data variance.
We compare Algorithm \ref{algo:kpca} against the exponentially convergent VR-PCA: we initialize the algorithms with the same random matrix and we train (and repeated for $5$ times) using the best constant learning rate we found empirically for each algorithm.
We see that Algorithm \ref{algo:kpca} retains fast convergence even if the datasets are not strictly low rank, and that it has a clear advantage over VR-PCA before the iteration reaches a full pass; indeed, VR-PCA requires a full-pass over the dataset before its first iterate.

\bibliography{mybib}
\bibliographystyle{icml2019}

\appendix
\newpage
\section{Proofs for Proposition~\ref{prop:bad_event} }
\begin{proof}[Proof of Proposition~\ref{prop:bad_event}]
Recall definition of $\mathcal{G}_{t}$,
$
\mathcal{G}_{t}\Defeq \{\Delta^i \le 1-\tau, \forall i\le t\} \, .
$
We partition its complement as
$
\mathcal{G}_{t}^c
=
\cup_{i=1}^t \mathcal{G}_{i-1}\setminus \mathcal{G}_{i} \, .
$
For $i>1$, we first consider the individual events:
\begin{eqnarray*}
\mathcal{G}_{i-1}\setminus \mathcal{G}_{i}
=
\mathcal{G}_{i-1} \cap  \mathcal{G}_{i}^c
=
\{
\Delta^i
>
1-\tau 
\}
\cap 
\{
\forall j < i,~
\Delta^j
\le
1-\tau 
\}
\end{eqnarray*}
For any strictly increasing positive measurable function $g$, the above is equivalent to
\[
\mathcal{G}_{i-1}\setminus \mathcal{G}_{i}
=
\{
g(\Delta^i)
>
g(1-\tau)
\text{~~and~~}
\forall j < i,~
g(\Delta^j)
\le
g(1-\tau)
\}
\]
Since event $\mathcal{G}_{i-1}$ occurs is equivalent to $\{\indic{\mathcal{G}_{i-1}}=1\}$, we can write
\[
\mathcal{G}_{i-1}\setminus \mathcal{G}_{i}
=
\{
g(\Delta^i)
>
g(1-\tau)
\text{~~and~~}
\forall j < i,~
g(\Delta^j)
\le
g(1-\tau),
\text{~~and~~}
\indic{\mathcal{G}_{i-1}}=1
\}
\]
Additionally, since for any $j^{\prime}<j$, $\mathcal{G}_{j^{\prime}} \supset \mathcal{G}_{j}$, that is,
$\{\indic{\mathcal{G}_{j}}=1\}$ implies $\{\indic{\mathcal{G}_{j^{\prime}}}=1\}$, we have
\begin{eqnarray*}
\mathcal{G}_{i-1}\setminus \mathcal{G}_{i} \\
= 
\{g(\Delta^i)>g(1-\tau)
\text{~and~}
\forall j < i,~
g(\Delta^j)
\le
g(1-\tau), 
\indic{\mathcal{G}_{i-1}}=1,
\indic{\mathcal{G}_{j^{\prime}}}=1, \forall j^{\prime}<i-1
\} \\
=
\{\indic{\mathcal{G}_{i-1}} g(\Delta^i)>g(1-\tau) \text{~and~} \forall j<i, \indic{\mathcal{G}_{j-1}}g(\Delta^j)  \le g(1-\tau), \text{~and~}\indic{\mathcal{G}_{j}}=1\} \\
\subset
\{\indic{\mathcal{G}_{i-1}} g(\Delta^i)>g(1-\tau) \text{~and~}   \forall j<i, \indic{\mathcal{G}_{j-1}}g(\Delta^j)  \le g(1-\tau)\} 
\end{eqnarray*}
So the union of the terms $\mathcal{G}_{i-1}\setminus \mathcal{G}_{i}$ can be upper bounded as
\begin{eqnarray*}
\cup_{i=1}^t \mathcal{G}_{i-1}\setminus \mathcal{G}_{i} \subset \\
\cup_{i=2}^t
\{\indic{\mathcal{G}_{i-1}} g(\Delta^i)>g(1-\tau) ,  \indic{\mathcal{G}_{j-1}}g(\Delta^j)  \le g(1-\tau), \forall 1\le j<i\}  \\
\cup
\{\indic{\mathcal{G}_{0}} g(\Delta^1)>g(1-\tau) \}
\end{eqnarray*}
Observe that the event above can also be written as
\[
\{\sup_{1\le i\le t} \indic{\mathcal{G}_{i-1}} g(\Delta^i)
>
g(1-\tau)\} \, .
\]
Now we upper bound 
$
\proba{\{\sup_{1\le i\le t} \indic{\mathcal{G}_{i-1}} g(\Delta^i)
>
g(1-\tau)\}}
$
by applying a martingale large deviation inequality.
To achieve this, the key step is to find a suitable function $g$ such that the stochastic process
\[
\indic{\mathcal{G}_{0}} g(\Delta^1), \indic{\mathcal{G}_{1}} g(\Delta^2), \dots, \indic{\mathcal{G}_{i-1}} g(\Delta^i), \dots
\]
is a supermartingale. In this proof, we choose 
$g:\mathbb{R}\rightarrow\mathbb{R}_{>0}$ to be $g(x) = \exp{sx}$ for $s=\frac{2}{1-\tau} \ln \frac{1}{\delta}$.
\newline
By Lemma \ref{lm:conditional},
\begin{eqnarray*}
\expec{
\indic{\mathcal{G}_i} \exp{s\Delta^{i+1}} | \mathcal{F}_i} \\
\le
\indic{\mathcal{G}_{i-1}}
\exp{s\Delta^i \big(1-2\eta^{i+1} \lambda_k \tau+ (\eta^{i+1})^2 C^{i+1}\lambda_1 \big)
+2s^2(\eta^{i+1})^2 |Z|^2} \, \\
\le
\indic{\mathcal{G}_{i-1}}\exp{
s\Delta^i \big(1-2\eta^{i+1} \lambda_k \tau+ (\eta^{i+1})^2 C^{i+1}\lambda_1 \big)
}
\exp{s^2(\eta^{i+1})^2 8(b + \|\Sigma^*\|_F)^2\Delta^i} \\
=
\indic{\mathcal{G}_{i-1}}
\exp{
s\Delta^i \bigg(
1-2\eta^{i+1}\lambda_k\tau + (\eta^{i+1})^2 C^{i+1}\lambda_1 
+ s(\eta^{i+1})^2 8(b + \|\Sigma^*\|_F)^2
\bigg)
} 
%\le
%\indic{\mathcal{G}_{i-1}}
%\exp{s\Delta^i}
\end{eqnarray*}
Since we choose the learning rate in Algorithm \ref{algo:kpca} such that
\begin{eqnarray}
\label{eqn:1-1}
\eta^{i+1} < 
\frac{2\lambda_k\tau}{b(k+1)\lambda_1 + \frac{16}{1-\tau}\ln\frac{1}{\delta} (b + \|\Sigma^*\|_F)^2}
=
\frac{2\lambda_k\tau}{b(k+1)\lambda_1 + 8 s (b + \|\Sigma^*\|_F)^2} \, .
\end{eqnarray}
And since $\eta^{i+1} \le \frac{\sqrt{2}-1}{b}$, it can be seen that
\begin{eqnarray}
\label{eqn:1-2}
C^{i+1}=kb+2\eta^{i+1}b^2+(\eta^{i+1})^2 b^3 \le b(k+1)
\end{eqnarray}
Combining Eq~\eqref{eqn:1-1} and ~\eqref{eqn:1-2}, we get
\[
-2\eta^{i+1}\lambda_k\tau + (\eta^{i+1})^2 C^{i+1}\lambda_1 
+ s(\eta^{i+1})^2 8(b + \|\Sigma^*\|_F)^2
\le 0
\]
Therefore,
\[
\expec{\indic{\mathcal{G}_i} \exp{s\Delta^{i+1}} | \mathcal{F}_i}
\le
\indic{\mathcal{G}_{i-1}}
\exp{
s\Delta^i
}
\]
Thus, letting $M_i = \indic{\mathcal{G}_{i-1}}\exp{s\Delta^i}$, 
$(M_i)_{i\ge 1}$
forms a supermartingale. A version of Doob's inequality for supermartingale \cite[p. 231]{Durrett11probability:theory} implies that
\begin{eqnarray*}
\proba{\mathcal{G}_{t}^c}
=
\proba{\cup_{i=1}^t \mathcal{G}_{i-1}\setminus \mathcal{G}_{i}} \\
\le
\proba{\sup_{i\ge1} \indic{\mathcal{G}_{i-1}} \exp{s\Delta^i}
>
\exp{s(1-\tau)}}
=
\proba{\sup_{i\ge1} M_i > \exp{s(1-\tau)}}
 \\
\le
\frac{\expec{M_1}}{\exp{s(1-\tau)}}
=
\frac{\expec{\indic{\mathcal{G}_{0}} \exp{s\Delta^1}}}{\exp{s(1-\tau)}}
\end{eqnarray*}
We bound the expectation as follows: By Inequality~\ref{eqn:1} of Lemma \ref{lm:conditional},
\[
\exp{s\Delta^1}\indic{\mathcal{G}_0}
\le
\exp{
s \bigg( \Delta^0 (1-2\eta^{1} \lambda_k(1-\Delta^0))
-
2\eta^{1} Z
+ (\eta^{1})^2 C^{1}\lambda_1 \Delta^0 \bigg)
} \indic{\mathcal{G}_0}
\]
Taking expectation on both sides,
\begin{eqnarray*}
\expec{\indic{\mathcal{G}_{0}} \exp{s\Delta^1}} \\
\le 
\exp{
s \bigg( \Delta^0 (1-2\eta^{1} \lambda_k(1-\Delta^0))
+ (\eta^{1})^2 C^{1}\lambda_1 \Delta^0 \bigg)
}
\expec{
\exp{s(-2\eta^1Z)}
} \\
\le
\exp{
s \bigg( \Delta^0 (1-2\eta^{1} \lambda_k(1-\Delta^0))
+ (\eta^{1})^2 C^{1}\lambda_1 \Delta^0 \bigg)
}
\exp{2 s^2(\eta^{1})^2 |Z|^2}  \\
\le
\exp{
s\Delta^0 \bigg(
1-2\eta^{1}\lambda_k\tau + (\eta^{1})^2 C^{1}\lambda_1 
+ s(\eta^{1})^2 8(b + \|\Sigma^*\|_F)^2
\bigg)
} \\
\le
\exp{
s \frac{1-\tau}{2} \bigg(
1-2\eta^{1}\lambda_k\tau + (\eta^{1})^2 C^{1}\lambda_1 
+ s(\eta^{1})^2 8(b + \|\Sigma^*\|_F)^2
\bigg)
} \\
\le
\exp{
s \frac{1-\tau}{2} 
}
\end{eqnarray*}
where the second inequality holds by Hoeffding's lemma (using the same argument as in Lemma \ref{lm:conditional}), and the third and fourth inequality is by the fact that $\Delta^0\le \frac{1-\tau}{2}$ holds by our assumption.
Finally,
\[
\frac{\expec{\indic{\mathcal{G}_{0}} \exp{s\Delta^1}}}{\exp{s(1-\tau)}}
\le
\exp{
-s (1-\tau) /2
}
\le
\delta \, ,
\]
since we set $s = \frac{2}{1-\tau} \ln \frac{1}{\delta}$.
\end{proof}
%%%%%%%%%%%%%%%% Lemma: conditional expectation
\subsection{Auxiliary lemma for Proposition~\ref{prop:bad_event}}
\begin{proof}[Proof of Lemma \ref{lm:conditional}]
By V2 of Proposition \ref{prop:iter_wise}, for $\Sigma^*$ with rank $k$,
\begin{eqnarray*}
tr(U^* P^{i+1}) 
\geq
tr(U^* P^{i})  \\
+
2\eta^{i+1}\sum_{\ell=1}^k \lambda_{\ell} (1-u_{\ell}^{\top}P^i u_{\ell})
(u_{\ell}^{\top}P^i u_{\ell} - \sum_{m\ne \ell} [1-u_m^{\top}P^i u_m]) 
+ 2\eta^{i+1} Z \\
- (\eta^{i+1})^2 C^{i+1} tr(\Sigma^* - \Sigma^* P^i)
\end{eqnarray*}
From this, we can derive
\begin{eqnarray*}
\Delta^{i+1} 
\le
\Delta^i - 2\eta^{i+1}\sum_{\ell=1}^k \lambda_{\ell} (1-u_{\ell}^{\top}P^tu_{\ell})
(1-\Delta^i)  
- 2\eta^{i+1} Z
+ (\eta^{i+1})^2 C^{i+1}\lambda_1 \Delta^i \nonumber\\
\le
\Delta^i - 2\eta^{i+1} \lambda_k tr(U^*-U^*P^i)
(1-\Delta^i)  
- 2\eta^{i+1} Z
+ (\eta^{i+1})^2 C^{i+1}\lambda_1 \Delta^i \nonumber\\
=
\Delta^i - 2\eta^{i+1} \lambda_k \Delta^i
(1-\Delta^i)  
- 2\eta^{i+1} Z
+ (\eta^{i+1})^2 C^{i+1}\lambda_1 \Delta^i \nonumber\\
=
\Delta^i (1-2\eta^{i+1} \lambda_k(1-\Delta^i))
-
2\eta^{i+1} Z
+ (\eta^{i+1})^2 C^{i+1}\lambda_1 \Delta^i \\
\end{eqnarray*}
This implies that for any $s>0$,
\[
\exp{s\Delta^{i+1}}
\le
\exp{
s \bigg( \Delta^i (1-2\eta^{i+1} \lambda_k(1-\Delta^i))
-
2\eta^{i+1} Z
+ (\eta^{i+1})^2 C^{i+1}\lambda_1 \Delta^i \bigg)
}
\]
Multiplying both sides of the inequality by $\indic{\mathcal{G}_i}$, we get
\begin{eqnarray}
\label{eqn:1}
\exp{s\Delta^{i+1}} \indic{\mathcal{G}_i} \nonumber\\
\le
\exp{
s \bigg( \Delta^i (1-2\eta^{i+1} \lambda_k(1-\Delta^i))
-
2\eta^{i+1} Z
+ (\eta^{i+1})^2 C^{i+1}\lambda_1 \Delta^i \bigg)
} \indic{\mathcal{G}_i}  \,
\end{eqnarray}
We can further upper bound the RHS of Inequality~\eqref{eqn:1} above as
\begin{eqnarray*}
\exp{s\bigg( \Delta^i (1-2\eta^{i+1} \lambda_k(1-\Delta^i))
-
2\eta^{i+1} Z
+ (\eta^{i+1})^2 C^{i+1}\lambda_1 \Delta^i \bigg)} \indic{\mathcal{G}_i} \\
\le
\exp{s\bigg( \Delta^i (1-2\eta^{i+1} \lambda_k \tau)
-
2\eta^{i+1} Z
+ (\eta^{i+1})^2 C^{i+1}\lambda_1 \Delta^i \bigg)} \indic{\mathcal{G}_i} \\
\le
\exp{s\bigg( \Delta^i (1-2\eta^{i+1} \lambda_k \tau)
-
2\eta^{i+1} Z
+ (\eta^{i+1})^2 C^{i+1}\lambda_1 \Delta^i \bigg)} \indic{\mathcal{G}_{i-1}} \\
\le
\indic{\mathcal{G}_{i-1}}\exp{s \bigg( \Delta^i (1-2\eta^{i+1} \lambda_k \tau)+ (\eta^{i+1})^2 C^{i+1}\lambda_1 \Delta^i \bigg)}
\exp{s \big(-2\eta^{i+1} Z \big)}
\end{eqnarray*}
The first inequality is due to the fact that ``$\{\indic{\mathcal{G}_i}=1\}$ implies 
$
\{
\Delta^i \le 1-\tau
\}
$''
and the second inequality holds since 
$\mathcal{G}_i \subset \mathcal{G}_{i-1} \, .$
Incorporating this bound into inequality~\eqref{eqn:1} and taking conditional expectation w.r.t. $\mathcal{F}_i$ on both sides, we get
\begin{eqnarray*}
\indic{\mathcal{G}_i} \expec{\exp{s\Delta^{i+1}} | \mathcal{F}_i} 
=
\expec{\exp{s\Delta^{i+1}} \indic{\mathcal{G}_i} | \mathcal{F}_i} 
\\
\le
\indic{\mathcal{G}_{i-1}}\exp{s \bigg( \Delta^i (1-2\eta^{i+1} \lambda_k \tau)+ (\eta^{i+1})^2 C^{i+1}\lambda_1 \Delta^i \bigg)}
\expec{\exp{s \big(-2\eta^{i+1} Z \big)} | \mathcal{F}_i}
\end{eqnarray*}
Now we upper bound 
$
\expec{\exp{s \big(-2\eta^{i+1} Z \big)} | \mathcal{F}_i} \,:
$
Since 
\[
- 2 \eta^{i+1} |Z|
\le
2 \eta^{i+1} (-Z)
\le 
2 \eta^{i+1} |Z| \, ,
\]
and
\[
\expec  {2 s \eta^{i+1} (-Z) |\mathcal{F}_i} = \expec  {2 s \eta^{i+1} Z |\mathcal{F}_i}= 0 \, ,
\]
by Hoeffding's lemma
\begin{eqnarray*}
\expec{ 
\exp{2 s \eta^{i+1} (-Z) | \mathcal{F}_i}
}
\le 
\exp{\frac{s^2 (4 \eta^{i+1} |Z|)^2}{8}}
=
\exp{2s^2(\eta^{i+1})^2 |Z|^2} \, .
\end{eqnarray*}
Combining this with the previous bound, we get
\begin{eqnarray*}
\indic{\mathcal{G}_i} \expec{\exp{s\Delta^{i+1}} | \mathcal{F}_i}  \\
\le
\indic{\mathcal{G}_{i-1}}\exp{s \bigg( \Delta^i (1-2\eta^{i+1} \lambda_k \tau)+ (\eta^{i+1})^2 C^{i+1}\lambda_1 \Delta^i \bigg)}
\exp{2 s^2(\eta^{i+1})^2 |Z|^2}  
\end{eqnarray*}
\end{proof}
\section{Proofs for Proposition~\ref{prop:iter_wise}}
\begin{proof}[Proof of Proposition \ref{prop:iter_wise}]
We consider
\[
\expec{tr U^* P^{t+1} \big | \mathcal{F}_t }
=
\expec{tr U^* (W^{t+1})^{\top} (W^{t+1} (W^{t+1})^{\top})^{-1} W^{t+1} \big | \mathcal{F}_t } \, ,
\]
Since $U^*$ is positive semidefinite, we can write it as
$U^* = ((U^*)^{1/2})^2$. 
By the proof of Lemma~\ref{lm:inverse_matrix_bound}, 
\[
(W^{t+1} (W^{t+1})^{\top})^{-1} 
\succeq
(1-(\eta^{t+1})^2\|r^{t+1}\|^2 \|s^{t+1}\|^2)I_{k^{\prime}}
\, 
\] 
Letting $V\Defeq W^{t+1} (U^*)^{1/2}$, this implies that
\[
V^{\top}
\Big [W^{t+1} (W^{t+1})^{\top})^{-1}-(1-(\eta^{t+1})^2\|r^{t+1}\|^2 \|s^{t+1}\|^2)I_{k^{\prime}}
\Big ]
V
\succeq
0
\]
That is, the matrix on the left-hand-side above is positive semi-definite.
Since trace of a positive semi-definite matrix is non-negative, we have
\[
tr (V^{\top} W^{t+1} (W^{t+1})^{\top})^{-1} V)
\ge
tr (V^{\top} (1-(\eta^{t+1})^2\|r^{t+1}\|^2 \|s^{t+1}\|^2)V)
\]
By commutative property of trace, we further get
\begin{eqnarray*}
tr(U^*(W^{t+1})^{\top} [W^{t+1}(W^{t+1})^{\top}]^{-1}W^{t+1})
=
tr (V^{\top} W^{t+1} (W^{t+1})^{\top})^{-1} V) \\
\ge
tr (V^{\top} (1-(\eta^{t+1})^2\|r^{t+1}\|^2 \|s^{t+1}\|^2) V)  \\
=
(1- (\eta^{t+1})^2 \|r^{t+1}\|^2 \|s^{t+1}\|^2) tr (U^*(W^{t+1} )^{\top} W^{t+1})
\end{eqnarray*}
Taking expectation on both sides, we get
\[
\expec{tr U^* P^{t+1} \big | \mathcal{F}_t}
\ge
(1- (\eta^{t+1})^2 \|r^{t+1}\|^2 \|s^{t+1}\|^2)
\expec{ tr (U^*(W^{t+1} )^{\top} W^{t+1}) \big | \mathcal{F}_t }
\]
Now we in turn lower bound 
$
\expec{ tr [U^*(W^{t+1} )^{\top} W^{t+1}]\big | \mathcal{F}_t}\, .
$
First, we have
\begin{eqnarray*}
(W^{t+1} )^{\top} W^{t+1}
= (W^t + \eta^{t+1} s^{t+1} (r^{t+1})^{\top})^{\top} (W^t + \eta^{t+1} s^{t+1} (r^{t+1})^{\top}) \\
= P^t + \eta^{t+1} r^{t+1} (s^{t+1})^{\top} W^t + \eta^{t+1} (W^t)^{\top} s^{t+1} (r^{t+1})^{\top} + (\eta^{t+1})^2 \|s^{t+1}\|^2 r^{t+1} (r^{t+1})^{\top}
\end{eqnarray*}
This implies that
\begin{eqnarray*}
\expec{ tr [U^*(W^{t+1} )^{\top} W^{t+1}]\big | \mathcal{F}_t}
=
tr (U^* \expec{(W^{t+1} )^{\top} W^{t+1}\big | \mathcal{F}_t}) \\
=
tr(U^* P^t) + \eta^{t+1} tr \expec{ U^* r^{t+1} (s^{t+1})^{\top}\big | \mathcal{F}_t} W^t \\
+
\eta^{t+1} tr (\expec{ U^*(W^t)^{\top} s^{t+1} (r^{t+1})^{\top} \big | \mathcal{F}_t}) \\
+
(\eta^{t+1})^2 \expec{ \|s^{t+1}\|^2 tr (U^* r^{t+1} (r^{t+1})^{\top})\big | \mathcal{F}_t } \\
\geq
tr(U^* P^t) + \eta^{t+1} tr U^*\expec{ r^{t+1} (s^{t+1})^{\top}\big | \mathcal{F}_t} W^t \\
+
\eta^{t+1} tr (U^* \expec{ (W^t)^{\top} s^{t+1} (r^{t+1})^{\top} \big | \mathcal{F}_t}) \\
\ge
tr(U^* P^t) + 2\eta^{t+1} tr (U^* \expec{ (W^t)^{\top} s^{t+1} (r^{t+1})^{\top} \big | \mathcal{F}_t})
\end{eqnarray*}
%{\color{red} Perhaps the two gradient terms can be shown equal directly}
the second to last inequality follows since we can drop the non-negative term, and the last inequality holds since the $tr(A)=tr(A^{\top})$ for any square matrix $A$.
Since
\[
\expec{s^{t+1}(r^{t+1})^{\top} \big | \mathcal{F}_t} = W^t (\Sigma^*- \Sigma^* P^t ) \, ,
\] 
we have
\[
tr U^*\expec{(W^t)^{\top} s^{t+1} (r^{t+1})^{\top}\big | \mathcal{F}_t}
=tr U^*(P^t\Sigma^* -  P^t\Sigma^*P^t) \, .
\]
By Lemma \ref{lm:stationary_pts},
\begin{eqnarray*}
tr U^*\expec{(W^t)^{\top} s^{t+1} (r^{t+1})^{\top} \big | \mathcal{F}_t} \\
=
tr U^*(P^t\Sigma^* -  P^t\Sigma^*P^t) \\
\ge
\sum_{i=1}^k \lambda_i (1-u_i^{\top}P^tu_i)
(u_i^{\top}P^tu_i - \sum_{j\ne i, j\in [k]} [1-u_j^{\top}P^tu_j]) 
\end{eqnarray*}
Then we have, 
\begin{eqnarray*}
\expec{ tr [U^*(W^{t+1} )^{\top} W^{t+1}] \big | \mathcal{F}_t } 
\geq
tr(U^*P^t) \\
+
2\eta^{t+1}\sum_{i=1}^k \lambda_i (1-u_i^{\top}P^tu_i)
(u_i^{\top}P^tu_i - \sum_{j\ne i, j\in [k]} [1-u_j^{\top}P^tu_j])
\end{eqnarray*}
Now we can bound $\expec{tr U^* P^{t+1} \big | \mathcal{F}_t }$ as:
\begin{eqnarray}
\expec{ tr(U^*(W^{t+1})^{\top} [W^{t+1}(W^{t+1})^{\top}]^{-1}W^{t+1}) \big | \mathcal{F}_t } \nonumber\\
\geq
\expec{ tr (U^*(W^{t+1} )^{\top} W^{t+1}) \big | \mathcal{F}_t } %\nonumber\\
- 
\expec{ (\eta^{t+1})^2 \|r^{t+1}\|^2 \|s^{t+1}\|^2 tr [U^*(W^{t+1} )^{\top} W^{t+1}] \big | \mathcal{F}_t } 
\nonumber\\
\geq
tr(U^* P^t)
+
2\eta^{t+1}\sum_{i=1}^k \lambda_i (1-u_i^{\top}P^tu_i)
(u_i^{\top}P^tu_i - \sum_{j\ne i, j\in [k]} [1-u_j^{\top}P^tu_j]) \nonumber \\
-
\expec{ (\eta^{t+1})^2 \|r^{t+1}\|^2 \|s^{t+1}\|^2 tr (U^*(W^{t+1} )^{\top} W^{t+1}) \big | \mathcal{F}_t }
~~~\label{ineq:main_interm}
\end{eqnarray}
%Since
%\[
%B \geq \max_x \|x\|^2 2(m + (\eta^o)^2\|x\|^4) tr\Sigma^*
%\] 
Note that the second term in the inequality above can be lower bounded as:
\begin{eqnarray*}
\sum_{i=1}^k \lambda_i (1-u_i^{\top}P^tu_i)
(u_i^{\top}P^tu_i - \sum_{j\ne i, j\in [k]} [1-u_j^{\top}P^tu_j]) \\
=
\sum_{i=1}^k \lambda_i (1-u_i^{\top}P^tu_i)
(\sum_{j\in [k]} u_j^{\top}P^tu_j - (k-1)) \\
=
\sum_{i=1}^k \lambda_i (1-u_i^{\top}P^tu_i)
(1-\Delta^t)
\ge
\lambda_k \Delta^t(1-\Delta^t) 
\end{eqnarray*}
Since $k^{\prime}\le d$, and rows of $W^t$ are orthonormal, we get
\[
\|s^{t+1}\|^2
=
\|W^t X^{t+1}\|^2
\le \|X^{t+1}\|^2 \, .
\] 
Similarly, $\|r^{t+1}\|^2 \le  \|X^{t+1}\|^2 \, .$
Therefore,
\begin{eqnarray*}
\|s^{t+1}\|^2 tr (U^*(W^{t+1} )^{\top} W^{t+1}) \\
\le
\|X^{t+1}\|^2
 \bigg(tr U^*P^t + 2\eta^{t+1} tr U^* r^{t+1} (X^{t+1})^{\top} P^t 
 + (\eta^{t+1})^2 \|s^{t+1}\|^2 tr U^* r^{t+1} (r^{t+1})^{\top}\bigg) \\
=
\|X^{t+1}\|^2
 \bigg(tr U^*P^t + 2\eta^{t+1} (X^{t+1})^{\top} P^t  U^* r^{t+1}
 + (\eta^{t+1})^2 \|s^{t+1}\|^2 (r^{t+1})^{\top} U^* r^{t+1} \bigg) \\
\le
\|X^{t+1}\|^2 \bigg( trU^*P^t + 2\eta^{t+1} \|X^{t+1}\|^2 + (\eta^{t+1})^2 \|s^{t+1}\|^2 \|r^{t+1}\|^2\bigg) \\
\le
\|X^{t+1}\|^2 \bigg( trU^*P^t + 2\eta^{t+1} \|X^{t+1}\|^2 + (\eta^{t+1})^2 \|X^{t+1}\|^4 \bigg)
\end{eqnarray*}
On the other hand, we have
\[
\expec{ \|r^{t+1} \|^2\big | \mathcal{F}_t}
= tr(\Sigma^* - \Sigma^*P^t)
\]
Thus, the quadratic term (quadratic in $\eta^{t+1}$) in Eq~\eqref{ineq:main_interm} can be upper bounded as
\begin{eqnarray*}
\expec{ (\eta^{t+1})^2 \|r^{t+1}\|^2 \|s^{t+1}\|^2 tr (U^*(W^{t+1} )^{\top} W^{t+1}) \big | \mathcal{F}_t }\\
\leq
(\eta^{t+1})^2 C^t_o \expec{ \|r^{t+1} \|^2\big | \mathcal{F}_t}
=
(\eta^{t+1})^2 C^t_o tr(\Sigma^* - \Sigma^*P^t)
\end{eqnarray*}
where 
%{\color{red} This is where we see that $\eta = O(\frac{1}{\max_x \|x\|^2})$; modified the bound on $C$ here}
\begin{eqnarray*}
C^t_o\Defeq 
\max_X \|X\|^2 \big(trU^*P^t + 2\eta^{t+1} \|X\|^2 + (\eta^{t+1} \|X\|^2)^2\big) \\
\le
\max_X \|X\|^2 \big(k + 2\eta^{t+1} \|X\|^2 + (\eta^{t+1} \|X\|^2)^2\big)\\
=
kb + 2\eta^{t+1}b^2 + (\eta^{t+1})^2b^3
\end{eqnarray*}
%By our choice of $\eta^{t+1}$, we get that for some fixed $\tau \le 1$
%\[
%\eta^{t+1}r
%=
%\frac{\lambda_k\tau}{\lambda_1}\frac{1}{4k}
%\le
%1
%\le
%k
%\]
%So
%$
%(\eta^{t+1})^2r^2 \le \eta^{t+1}r
%$, and we get
%\[
%kr + 2\eta^{t+1}r^2 + (\eta^{t+1})^2r^3
%\le
%r (k+2\eta^{t+1}r + \eta^{t+1}r )
%\le
%r \cdot 4k
%\]
Note that by our definition of $C^{t+1}$, 
\[
C^{t+1}\Defeq kb + 2\eta^{t+1}b^2 + (\eta^{t+1})^2b^3
\]
We get that
\begin{eqnarray*}
\expec{ (\eta^{t+1})^2 \|r^{t+1}\|^2 \|s^{t+1}\|^2 tr (U^*(W^{t+1} )^{\top} W^{t+1}) \big | \mathcal{F}_t }
\le
C^{t+1} (\eta^{t+1})^2 tr(\Sigma^* - \Sigma^*P^t) \, .
\end{eqnarray*}
Since 
\[
\lambda_1 U^* \succeq \Sigma^*  \succeq \lambda_k U^*
\]
We have
\begin{eqnarray*}
(I-P^t)^{\top}\lambda_1 U^*  (I-P^t)
\succeq 
(I-P^t)^{\top} \Sigma^*  (I-P^t) \\
\succeq 
(I-P^t)^{\top} \lambda_k U^* (I-P^t)
\end{eqnarray*}
Note that the projection matrix satisfies 
$(I-P^t)^{\top}=(I-P^t)$ 
and 
$(I-P^t)(I-P^t) = (I-P^t)$.
This implies that
\begin{eqnarray}
\label{eqn:loss_measure}
\lambda_1 tr U^*  (I-P^t)
\ge
tr \Sigma^*  (I-P^t)
\ge
\lambda_k  tr U^*  (I-P^t)
\end{eqnarray}
Finally, plug the lower bound in Eq. \eqref{ineq:main_interm} completes the proof:
\begin{eqnarray*}
\expec{tr(U^* P^{t+1})|\mathcal{F}_t} 
\geq 
tr(U^* P^t)  
+
\lambda_k \Delta^t(1-\Delta^t)  
-
(\eta^{t+1})^2 C^{t+1} \lambda_1 tr(U^*(I-P^t))\\
\ge
tr(U^* P^t)  
+
\lambda_k \Delta^t(1-\Delta^t) 
-
(\eta^{t+1})^2 C^{t+1} \lambda_1 \Delta^t\\
\end{eqnarray*}
\newline
The inequality of the statement in Version 2 can be obtained similarly, by setting
\[
Z  = 2 \bigg( tr(U^*(W^t)^{\top}s^{t+1} (r^{t+1})^{\top}) - \expec{ tr(U^*(W^t)^{\top}s^{t+1} (r^{t+1})^{\top}) \big | \mathcal{F}_t} \bigg)
\]
It is clear that $\expec{Z\big | \mathcal{F}_t} = 0$.
Now we upper bound $|Z|$: Since
\[
tr(U^*(W^t)^{\top}s^{t+1} (r^{t+1})^{\top})
= tr U^* P^t X^{t+1}(X^{t+1})^{\top} (I-P^t)
\]
we get (subsequently, we denote $P^t$ by $P$, $X^{t+1}$ by $X$)
\begin{eqnarray*}
|Z|  = |2tr (U^* PXX^{\top} (I-P)) - 2 tr(U^* P \Sigma^* (I-P))| \\
=
2|tr(XX^{\top} -\Sigma^*)(I-P)U^* P|
\leq
2 \sqrt{\|XX^{\top} - \Sigma^*\|_F^2 \|(I-P)U^* P\|_F^2}
\end{eqnarray*}
We first bound $\|(I-P)U^* P\|_F^2$, 
\[
\|(I-P)U^* P\|_F^2
\le \|(I-P)U^*\|_F^2
= tr(U^*-U^* P)
\]
where the first inequality is due to the fact that $P$ is a projection matrix so that norms are at best preserved if not smaller;
the second inequality is also due to the fact that both $U^*$ and $I-P$ are projection matrices, and thus $(I-P)(I-P)=I-P$ and $U^*U^* = U^*$. 
Now we bound $\|XX^{\top} - \Sigma^*\|_F^2$:
\begin{eqnarray*}
\|XX^{\top} - \Sigma^*\|_F^2
= tr (XX^{\top} - \Sigma^*)^{\top} (XX^{\top} - \Sigma^*) \\
= \|X\|^4 - 2X^{\top}\Sigma^*X + \|\Sigma^*\|_F^2 
\leq
\|X\|^4 +  \|\Sigma^*\|_F^2
\end{eqnarray*}
where the last inequality is due to the fact that $\Sigma^*$ is positive semidefinite, that is, for any $x$, we have
$
x^{\top}\Sigma^*x \ge 0
$.
Finally,
\begin{eqnarray*}
|Z|
\leq
2 \sqrt{\|XX^{\top} - \Sigma^*\|_F^2 \|(I-P)U^* P\|_F^2} \\
\leq
2 \sqrt{(\|X\|^4 +  \|\Sigma^*\|_F^2) tr(U^*-U^* P)} \\
\le
2 (\|X\|^2 +  \|\Sigma^*\|_F) \sqrt{\Delta^t}
\le
2 (b +  \|\Sigma^*\|_F) \sqrt{\Delta^t}
\end{eqnarray*}
The third inequality is by the following argument: for any $a\ge 0, b\ge 0$, we have
$
\sqrt{a^2+b^2} \le a + b
$,
Letting $a = \|X\|^2$ and $b =  \|\Sigma^*\|_F$ leads to the inequality.
\end{proof}

\subsection{Auxiliary lemmas for Proposition~\ref{prop:iter_wise}}
%{\color{red} Can also mention von Neumann series}
%%%%%% Inverse matrix approximation
%\begin{lm}[Inverse matrix approximation]
%\label{lm:inverse_matrix_bound}
%{\color{red} checked}
%Let $k$ be the number of rows in $W^t$. 
%Suppose the rows of $W^t$ are orthonormal, that is,
%$
%W^t (W^t)^{\top} = I_k
%$
%where $I_k$ is the $k$-by-$k$ identity matrix.
%Then for
%$
%W^{t+1} = W^t + \eta^{t+1} s^t (r^t)^{\top} \, ,
%$
%we have
%%Suppose $\eta^{t+1}$ is sufficiently small, then
%\[
%(W^{t+1} (W^{t+1})^{\top})^{-1} \succeq (1 - \lambda_1(E)) I_k \, ,
%\]
%where 
%$\lambda_1(E)$ is the largest eigenvalue of some matrix $E$, and
%$\lambda_1(E) = (\eta^{t+1})^2 \|r^t\|^2 \|s^t\|^2 \, .$
%%\[
%%tr(\Sigma^* (W^{t+1})^{\top}) (W^{t+1} (W^{t+1})^{\top})^{-1} W^{t+1} 
%%\geq
%%tr \Sigma^* (W^{t+1})^{\top} (I_m - diag(\lambda_1(E)))W^{t+1}
%%\]
%\end{lm}
%
\begin{proof}[Proof of Lemma \ref{lm:inverse_matrix_bound}]
\begin{eqnarray*}
W^{t+1} (W^{t+1} )^{\top}
= (W^t + \eta^{t+1} s^{t+1} (r^{t+1})^{\top})(W^t + \eta^{t+1} s^{t+1} (r^{t+1})^{\top})^{\top} \\
=
(W^t)(W^t)^{\top}
+ \eta^{t+1} s^{t+1} (r^{t+1})^{\top} (W^t)^{\top} \\
+ \eta^{t+1} W^t r^{t+1} (s^{t+1})^{\top}
+ (\eta^{t+1})^2 \|r^{t+1}\|^2 s^{t+1} (s^{t+1})^{\top} \\
= I_{k^{\prime}} + (\eta^{t+1})^2 \|r^{t+1}\|^2 s^{t+1} (s^{t+1})^{\top}
\end{eqnarray*}
where the last equality holds because $W^t$ has orthonormalized rows, and $r^{t+1}$ is orthogonal to rows of $W^t$.
Let 
\[
E \Defeq (\eta^{t+1})^2 \|r^{t+1}\|^2 s^{t+1} (s^{t+1})^{\top} \, .
\] 
Note that $E$ is symmetric and positive semidefinite.
We can eigen-decompose $E$ as
\[
E = Q \Lambda Q^{\top}
\]
where $Q$ is the eigenbasis and $\Lambda$ is a diagonal matrix with real non-negative diagonal values, with $\Lambda_{11}\ge \Lambda_{22}, \ge , \dots \Lambda_{k^{\prime}k^{\prime}}$, corresponding to the non-decreasing eigenvalues of $E$.
Then
\[
(I_{k^{\prime}} + E)^{-1} 
= 
(QQ^{\top} + Q \Lambda Q^{\top})^{-1}
=
Q (I_{k^{\prime}} + \Lambda)^{-1} Q^{\top} \, ,
\]
Since $I_{k^{\prime}} + \Lambda$ is a diagonal matrix, for any $i\in [k^{\prime}]$, we have
\[
(I_{k^{\prime}} + \Lambda)^{-1}_{ii} = \frac{1}{1+\Lambda_{ii}}
\geq
1-\Lambda_{ii}
\geq
1-\Lambda_{11}
\]
This implies that the matrix
\[
Q
[(I_{k^{\prime}} + \Lambda)^{-1} - (1-\Lambda_{11})I_{k^{\prime}}]
Q^{\top}
\]
is positive semidefinite, that is,
\[
Q (I_{k^{\prime}} + \Lambda)^{-1} Q^{\top}
\succeq
Q (1-\Lambda_{11})I_{k^{\prime}} Q^{\top}
=
(1-\Lambda_{11})I_{k^{\prime}}
\]
Thus,
\begin{eqnarray*}
(W^{t+1} (W^{t+1} )^{\top})^{-1}
=
(I_{k^{\prime}} + E)^{-1}
\succeq
(1-\Lambda_{11})I_{k^{\prime}}
\end{eqnarray*}
Finally, we compute the largest eigenvalue of $E$, $\lambda_1(E)\Defeq\Lambda_{11}$:
\begin{eqnarray*}
\lambda_1(E)
=
\max_{\|y\|=1} y^{\top} E y
=
\max_{\|y\|=1} (\eta^{t+1})^2 \|r^{t+1}\|^2 (y^{\top} s^{t+1} (s^{t+1})^{\top} y) \\
=
(\eta^{t+1})^2 \|r^{t+1}\|^2 \max_{\|y\|=1} (\langle s^{t+1}, y\rangle)^2
=
(\eta^{t+1})^2 \|r^{t+1}\|^2 \|s^{t+1}\|^2 
\end{eqnarray*}
This completes the proof.
\end{proof}

\begin{proof}[Proof of Lemma~\ref{lm:stationary_pts}]
We first prove \underline{statement 1}.
Since $U^*$ is symmetric and positive semidefinite, we can write it as $U^* = ((U^*)^{1/2})^2$.
So we have
\begin{eqnarray*}
tr(U^* - U^*P^t)
= tr (U^*(I-P^t)) \\
=tr ((U^*)^{1/2}(I-P^t)(I-P^t)(U^*)^{1/2}) 
= \|(I-P^t)(U^*)^{1/2}\|_F^2
\end{eqnarray*}
Therefore,
$
tr(U^*) = tr (U^*P^t)
$
implies that
\[
tr(U^* - U^*P^t) = \|(I-P^t)(U^*)^{1/2}\|_F^2 = 0
\]
which implies
\[
(I-P^t)(U^*)^{1/2} = 0
\]
where ``$0$'' denotes the zero matrix.
Thus,
\[
\Gamma^t
=
tr (P^t \Sigma^* (I-P^t)U^*)
= tr (P^t \Sigma^*(I-P^t)(U^*)^{1/2} (U^*)^{1/2})
= tr 0 = 0 \, .
\]
Now we prove \underline{statement 2}.
First, we upper bound $tr(P^t\Sigma^*P^tU^*)$:
\begin{eqnarray*}
tr(P^t\Sigma^*P^tU^*)
=tr(\sum_{p=1}^{k^{\prime}}\sum_{i=1}^k \sum_{q=1}^{k^{\prime}}\sum_{j=1}^k
\lambda_i\langle w_{p}, u_i\rangle
\langle w_q, u_i\rangle
\langle w_q, u_j\rangle
w_{p} u_j^{\top}
) 
\\
=
\sum_i \lambda_i
\sum_j
\sum_{p}
\langle w_{p}, u_i\rangle \langle w_{p}, u_j\rangle
\sum_q
\langle w_{q}, u_i\rangle \langle w_{q}, u_j\rangle
\\
= 
\sum_i \lambda_i\sum_j (u_i^{\top} P^t u_j)^2
\end{eqnarray*}
Note that by Cauchy-Schwarz inequality,
\[
(u_i^{\top} P^t u_j)^2
=
(u_i^{\top} P^t (P^t)^{\top} u_j)^2
\le
\|P^t u_i\|^2\|P^t u_j\|^2
=
(u_i^{\top} P^t u_i)(u_j^{\top} P^t u_j)
\]
On the other hand, for any $i$ and $j\ne i$ since $u_i \perp u_j$, we have
$$
u_i^{\top}P^t u_j
=
u_i^{\top}u_j - u_i^{\top} (I-P^t) u_j
=
-u_i^{\top} (I-P^t) u_j
$$
we have
\begin{eqnarray*}
(u_i^{\top}P^t u_j)^2 
= (u_i^{\top} (I-P^t) u_j)^2 
= (u_i^{\top} (I-P^t)(I-P^t) u_j)^2\\
\le \|(I-P^t) u_i\|^2 \|(I-P^t) u_j\|^2 \\
=
(\|u_i\|^2 - \|P^tu_i\|^2)(\|u_j\|^2 - \|P^tu_j\|^2)\\
=
(1-u_i^{\top}P^tu_i)(1-u_j^{\top}P^tu_j)
\end{eqnarray*}
where the inequality is by Cauchy-Schwarz inequality,
and the third equality is by combining orthogonality of projection $P^t$ and Pythagorean theorem.
This implies that
\begin{eqnarray*}
tr(P^t \Sigma^* P^tU^*)
=
\sum_i \lambda_i \sum_j (u_i^{\top} P^t u_j)^2 \\
=
\sum_i \lambda_i(u_i^{\top} P^t u_i)^2
+
\sum_i \lambda_i\sum_{j\ne i} (u_i^{\top} P^t u_j)^2 \\
\leq
\sum_i \lambda_i(u_i^{\top} P^t u_i)^2
+
\sum_i \lambda_i \sum_{j\ne i} (1-u_i^{\top}P^tu_i)(1-u_j^{\top}P^tu_j)
\end{eqnarray*}
Next, we expand $tr(P^t\Sigma^*U^*)$:
\begin{eqnarray*}
tr(P^t\Sigma^*U^*) = tr(U^*P^t\Sigma^*)
=
tr(\sum_i u_i u_i^{\top} P^t \sum_j\lambda_j u_j u_j^{\top}) \\
=
\sum_i\sum_j \lambda_j u_i^{\top} P^t u_j u_i^{\top} u_j 
= \sum_i \lambda_i u_i^{\top} P^t u_i
\end{eqnarray*}
Combining the upper bound on $tr(P^t \Sigma^* P^t U^*)$, we get,
\begin{eqnarray*}
tr(P^t\Sigma^*U^*) - tr(P^t \Sigma^*P^tU^*) 
=
\sum_i \lambda_i u_i^{\top} P^t u_i - tr(P^t \Sigma^* P^tU^*) \\
\ge
\sum_i \lambda_i u_i^{\top} P^t u_i
-
\sum_i \lambda_i (u_i^{\top} P^t u_i)^2
-
\sum_i\lambda_i \sum_{j\ne i} (1-u_i^{\top}P^tu_i)(1-u_j^{\top}P^tu_j)
\\
=
\sum_i \lambda_i (1-u_i^{\top}P^tu_i)
(u_i^{\top}P^tu_i - \sum_{j\ne i} [1-u_j^{\top}P^tu_j]) 
\end{eqnarray*}
Recall that
\[
\Delta^t = k - tr(U^*P^t)
= k - \sum_{i=1}^k u_i^{\top} P^t u_i \, ,
\]
Therefore, the last term in the inequality above can be further lower bounded by
$
\lambda_k \Delta^t (1-\Delta^t) \, .
$
\end{proof}
\section{Proof of Theorem \ref{thm:main}}
\label{sec:proof_thm}
\begin{proof}[Proof of Theorem \ref{thm:main}]
Since by our assumption, 
$
\Delta^o \le \frac{1-\tau}{2},
$
for any $\delta>0$, and
since we choose the learning rate such that
\[
\eta
\le
\min\{\frac{2\lambda_k\tau}{\frac{16}{1-\tau}\ln\frac{1}{\delta}(b+\|\Sigma^*\|_F)^2 + b(k+1)\lambda_1},\frac{\sqrt{2}-1}{b} \} \, ,
\]
we can
apply Proposition \ref{prop:bad_event} to bound the probability of bad event, $\mathcal{G}_t^c$ as
$
\proba{\mathcal{G}_t^c} \le
\delta \, .
$
By V1 of Proposition \ref{prop:iter_wise} (and let $C^{t+1}$ be as denoted therein), 
\begin{eqnarray*}
\expec{tr(U^* P^{t+1})|\mathcal{F}_t} 
\geq 
tr(U^* P^t)  
+
2\eta^{t+1} \lambda_k \Delta^t (1-\Delta^t)
-
(\eta^{t+1})^2 C^{t+1} \lambda_1\Delta^t \, ,
\end{eqnarray*}
Rearranging the inequality above and adding $k$ to both sides,
\begin{eqnarray*}
\expec{\Delta^{t+1} | \mathcal{F}_t} 
\le
\Delta^t 
- 2\eta^{t+1}\lambda_k \Delta^t (1-\Delta^t)
+
(\eta^{t+1})^2 C^{t+1} \lambda_1 \Delta^t \\
=
\Delta^t \bigg(1 - 2\eta^{t+1}\lambda_k(1-\Delta^t) + (\eta^{t+1})^2 C^{t+1} \lambda_1\bigg) \, ,
\end{eqnarray*}
Multiplying both sides of the inequality above by $\indic{\mathcal{G}_t}$, we get
\[
\expec{\Delta^{t+1} | \mathcal{F}_t} \indic{\mathcal{G}_t}
\le
\Delta^t \bigg(1 - 2\eta^{t+1}\lambda_k(1-\Delta^t) + (\eta^{t+1})^2 C^{t+1} \lambda_1\bigg)\indic{\mathcal{G}_t} \, ,
\]
Since $\mathcal{G}_t$ is $\mathcal{F}_t$-measurable, we have
\[
\expec{\Delta^{t+1}\indic{\mathcal{G}_t} | \mathcal{F}_t} 
=
\expec{\Delta^{t+1} | \mathcal{F}_t} \indic{\mathcal{G}_t} \, ,
\]
When $\indic{\mathcal{G}_t}=1$, we have
$
1-\Delta^t
\ge
\tau \, .
$
Therefore,
\begin{eqnarray*}
\expec{\Delta^{t+1}\indic{\mathcal{G}_t} | \mathcal{F}_t} 
\le
\Delta^t \bigg(1 - 2\eta^{t+1}\lambda_k\tau + (\eta^{t+1})^2 C^{t+1} \lambda_1\bigg)\indic{\mathcal{G}_t} \\
\le
\Delta^t \bigg(1 - 2\eta^{t+1}\lambda_k\tau + (\eta^{t+1})^2 C^{t+1} \lambda_1\bigg)\indic{\mathcal{G}_{t-1}}
\end{eqnarray*}
where the last inequality holds since $\mathcal{G}_t\subset \mathcal{G}_{t-1}$.
Taking expectation over both sides, we get the following recursion relation:
\[
\expec{\Delta^{t+1}\indic{\mathcal{G}_t}} 
\le
\expec{\Delta^t \indic{\mathcal{G}_{t-1}}} 
 \bigg(1 - 2\eta^{t+1}\lambda_k\tau + (\eta^{t+1})^2 C^{t+1} \lambda_1\bigg)
\]
We further bound
$
1- 2\eta^{t+1} \tau\lambda_k+ (\eta^{t+1})^2 C^{t+1} \lambda_1 \, .
$
First, note that since we require
%{\color{red}: One requirement on learning rate}
$
\eta^{t+1} \le \frac{\lambda_k\tau}{\lambda_1 b(k+3)} \, ,
$
we get
\[
\eta^{t+1} b
\le \frac{\lambda_k\tau}{\lambda_1 (k+3)}
\le
\frac{\tau}{(k+3)}
\le
\frac{1}{k+3}
\le
\frac{1}{4} \, .
\]
and
\[
C^{t+1}
=
b(k+2\eta^{t+1}b + (\eta^{t+1})^2b^2)
\le
b (k+1) \, .
\]
Thus, we get
\begin{eqnarray*}
1- 2\eta^{t+1} \tau\lambda_k+ (\eta^{t+1})^2 C^{t+1} \lambda_1 
\le
1- 2\eta^{t+1} \tau\lambda_k+ (\eta^{t+1})^2 b(k+1) \lambda_1
\end{eqnarray*}
Since our requirement of $\eta^{t+1}$ also implies that
%{\color{red}: One requirement on learning rate}
\[
\eta^{t+1} \le
\frac{2\lambda_k\tau}{b(k+1)\lambda_1} \, ,
\] 
it
guarantees that
\[
0 < 1- 2\eta^{t+1} \tau\lambda_k+ (\eta^{t+1})^2 b(k+1) \lambda_1 \, .
<
1
\]
For any $t$, define 
$
\alpha^t\Defeq 
2\eta^{t} \tau\lambda_k- (\eta^{t})^2 b(k+1) \lambda_1 \, ,
$
we have
\[
\expec{\Delta^{t+1}\indic{\mathcal{G}_t}}
\le
\expec{\Delta^{t}\indic{\mathcal{G}_{t-1}}}\big(1- \alpha^{t+1}\big) \, ,
\]
Recursively applying this relation, we get
\[
\expec{\Delta^{t+1}\indic{\mathcal{G}_t}}
\le
\Pi_{i=2}^{t+1}(1-\alpha^i) \expec{\Delta^1 \indic{\mathcal{G}_0}}
\]
Also note that
\[
\Delta^1 \indic{\mathcal{G}_0}
\le
(1-\alpha^1) \Delta^0 \, ,
\]
Therefore,
\[
\expec{\Delta^{t+1}\indic{\mathcal{G}_t}}
\le
\Pi_{i=1}^{t+1}(1-\alpha^i) \Delta^0
\]
Since for any $x\in (0, 1)$, it holds that
$\ln (1-x) \le -x$; we get
\[
 \Pi_{i=1}^t \big(1-\alpha^{i}\big)
 \le
 \exp{-\sum_{i=1}^t \alpha^{i}}
\]
Plugging in the value of $\alpha^i$'s, we get
\[
\expec{\Delta^{t}\indic{\mathcal{G}_{t-1}}}
\le
 \exp{-\sum_{i=1}^t \bigg(2\eta^{i} \tau\lambda_k- (\eta^{i})^2 b(k+1) \lambda_1\bigg)}
\]
Again, by our requirement on learning rate, we have for any $t$
\[
\eta^t \le \frac{\lambda_k\tau}{\lambda_1 b (k+3)} \le \frac{\lambda_k\tau}{\lambda_1 b (k+1)}
\]
Thus,
\[
2\eta^{i} \tau\lambda_k- (\eta^{i})^2 b(k+1) \lambda_1
\ge
\eta^i \tau \lambda_k > 0
\]
Since we choose a constant learning rate $\eta$, this implies that
\[
\expec{\Delta^{t}\indic{\mathcal{G}_{t-1}}}
\le
\exp{-\sum_{i=1}^t \eta \tau \lambda_k}
=
\exp{- t \eta \tau \lambda_k}
\]
Finally, since $\indic{\mathcal{G}_{t}} \le \indic{\mathcal{G}_{t-1}}$, we get
\[
\expec{\Delta^{t}\indic{\mathcal{G}_{t}}}
\le
\expec{\Delta^{t}\indic{\mathcal{G}_{t-1}}}
\le
\exp{- t \eta \tau \lambda_k}
\]
Combining this with the definition of conditional expectation, we get
\[
\expec{\Delta^t | \mathcal{G}_{t}}
\Defeq
\frac{\expec{\Delta^{t}\indic{\mathcal{G}_{t}}}}{\proba{\mathcal{G}_{t}}}
\le
\frac{\expec{\Delta^{t}\indic{\mathcal{G}_{t}}}}{1-\delta}
\le
\frac{1}{1-\delta} \exp{- t \eta \tau \lambda_k}
\]
where the first inequality is by our upper bound on the probability of bad event $\mathcal{G}_{t}^c$.
\end{proof}
\section{Canonical (principal) angles between subspaces}
\begin{defn}[\citet{Vu:minimax-subspace:2013}] 
Let $\mathcal{E}$ and $\mathcal{F}$ be $d$-dimensional subspaces of $\mathbb{R}^p$ with orthogonal projectors
$E$ and $F$. Denote the singular values of $EF^{\perp}$ by $s_1\ge s_2\dots \ge$. The canonical angles between $\mathcal{E}$ and $\mathcal{F}$ are the numbers
\[
\theta_k(\mathcal{E}, \mathcal{F}) = \arcsin (s_k)
\] 
for $k=1,\dots, d$ and the angle operator between $\mathcal{E}$ and $\mathcal{F}$ is the $d\times d$ matrix
\[
\Theta (\mathcal{E}, \mathcal{F}) = diag(\theta_1,\dots, \theta_d) \, .
\]
subject to
\[
\|x\| = \|y\| = 1, x^H x_i = 0, y^H y_i = 0, i = 1, \dots , k - 1.
\]
The vectors $\{x_1, \dots , x_m\}$ and $\{y_1, \dots , y_m\}$ are called the principal vectors.
\end{defn}
\begin{prop}
Let $\mathcal{E}$ and $\mathcal{F}$ be $d$-dimensional subspaces of $\mathbb{R}^p$ with orthogonal projectors
$E$ and $F$. Then
The singular values of $EF^{\perp}$ are
\[
s_1, s_2, \dots, s_d, 0, \dots, 0.
\]
And
\[
\|\sin \Theta (\mathcal{E}, \mathcal{F})\|_F^2
=
\|E F^{\perp}\|_F^2 \, .
\]
\end{prop}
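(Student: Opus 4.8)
The plan is to reduce everything to the singular value decomposition of $U^\top V$, where $U,V\in\mathbb{R}^{p\times d}$ are matrices whose columns form orthonormal bases of $\mathcal{E}$ and $\mathcal{F}$, so that $E=UU^\top$, $F=VV^\top$, and $F^\perp=I_p-VV^\top$. First I would record the fact that underlies the paper's definition: the canonical angles produced by the variational (principal-vector) characterization satisfy $\cos\theta_k=\sigma_k(U^\top V)$, the $k$-th largest singular value of $U^\top V$. Equivalently, writing the SVD $U^\top V=A\,\mathrm{diag}(\cos\theta_1,\dots,\cos\theta_d)\,B^\top$ with $A,B\in\mathbb{R}^{d\times d}$ orthogonal, the principal vectors are the columns of $UA$ and $VB$. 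This identification is exactly the sequential Courant--Fischer/Ky Fan description of the singular values of the bilinear form $x^\top y$ restricted to unit vectors of $\mathcal{E}$ and $\mathcal{F}$, which is what the ``subject to'' orthogonality constraints in the definition encode.

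With that in hand, I would compute the Gram matrix of $EF^\perp$ directly. Using idempotence of $E$ and $F^\perp$,
\[
(EF^\perp)(EF^\perp)^\top
= U U^\top (I_p-VV^\top) U U^\top
= U\bigl(I_d-(U^\top V)(U^\top V)^\top\bigr)U^\top .
\]
Substituting the SVD gives $(U^\top V)(U^\top V)^\top=A\,\mathrm{diag}(\cos^2\theta_k)\,A^\top$, and using $\sin^2\theta_k=1-\cos^2\theta_k$,
\[
(EF^\perp)(EF^\perp)^\top
= U A\,\mathrm{diag}(\sin^2\theta_1,\dots,\sin^2\theta_d)\,A^\top U^\top .
\]
Because $UA$ has $d$ orthonormal columns, this symmetric positive semidefinite matrix has rank at most $d$, its nonzero eigenvalues are precisely $\sin^2\theta_1,\dots,\sin^2\theta_d$, and the remaining $p-d$ eigenvalues vanish. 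Taking square roots yields the singular values of $EF^\perp$: they are $\sin\theta_1,\dots,\sin\theta_d$ followed by $p-d$ zeros. This proves the first assertion, shows $\mathrm{rank}(EF^\perp)\le d$, and in particular identifies $s_k=\sin\theta_k$, so $\theta_k=\arcsin(s_k)$ as the definition requires. The Frobenius identity is then immediate, since $\|\sin\Theta\|_F^2=\sum_{k=1}^d\sin^2\theta_k=\sum_{k=1}^d s_k^2=\|EF^\perp\|_F^2$. As a basis-free cross-check that bypasses diagonalization, I would also note, using cyclicity of the trace together with $E^\top E=E$ and $F^\perp F^\perp=F^\perp$,
\[
\|EF^\perp\|_F^2=tr(F^\perp E^\top E F^\perp)=tr(EF^\perp)=tr\,E-tr(EF)=d-\|U^\top V\|_F^2=d-\sum_{k}\cos^2\theta_k=\sum_{k}\sin^2\theta_k,
\]
which recovers the identity without naming the individual singular values.

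The main obstacle is the first step: rigorously matching the paper's variational definition of the canonical angles (the maximization of $x^\top y$ under the listed orthogonality constraints on the principal vectors) to the singular values of $U^\top V$, and doing so independently of the chosen orthonormal bases. Basis independence is the one genuine subtlety, but it is controlled: a change of orthonormal basis replaces $U$ by $UR$ and $V$ by $VS$ for orthogonal $R,S$, hence replaces $U^\top V$ by $R^\top(U^\top V)S$, which leaves its singular values (and therefore the $\theta_k$, the Gram matrix above, and both claims) unchanged. Once $\cos\theta_k=\sigma_k(U^\top V)$ is established, every remaining step is the short linear-algebra computation carried out above.
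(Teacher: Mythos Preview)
Your argument is correct and complete: the Gram-matrix computation $(EF^\perp)(EF^\perp)^\top = U\bigl(I_d-(U^\top V)(U^\top V)^\top\bigr)U^\top$ together with the SVD of $U^\top V$ cleanly delivers both the singular-value list and the Frobenius identity, and your basis-independence remark handles the one genuine subtlety.

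There is nothing to compare against, however: the paper states this proposition without proof. It appears in a short appendix on canonical angles, immediately after the definition (attributed to \citet{Vu:minimax-subspace:2013}), and is evidently recorded as a standard background fact rather than something the paper establishes. Your write-up would serve perfectly well as the missing justification.
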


\end{document}